\documentclass[11pt,technote,onecolumn]{IEEEtran}

\title{Frequency-Filtered Robust Tensor Principal Component Analysis}

\author{Shenghan Wang,
	Yipeng~Liu,~\IEEEmembership{Senior Member,~IEEE,}
	Lanlan~Feng,	
	~Ce~Zhu,~\IEEEmembership{Fellow,~IEEE}
	\thanks{This research is supported by National Natural Science Foundation
		of China (NSFC, No. 61602091), Sichuan Science and Technology Program (No. 2019YFH0008, No. 2018JY0035).}
	\thanks{All the authors are with School of Information and Communication Engineering,
		University of Electronic Science and Technology of China (UESTC), Chengdu, 611731, China,
		email: yipengliu@uestc.edu.cn}
}

\usepackage[utf8]{inputenc}
\usepackage[T1]{fontenc}
\usepackage{graphicx,subcaption,bm,mathtools,booktabs,tabularx,blindtext,multirow,dcolumn}
\usepackage{cite}
\usepackage{amsmath,amssymb,amsfonts,amsthm}
\usepackage{algorithm,algpseudocode}
\usepackage{setspace}
\usepackage{textcomp}
\usepackage{caption,xcolor}
\usepackage{enumitem}
\usepackage[colorlinks=true]{hyperref}
\setenumerate[1]{itemsep=0pt,partopsep=0pt,parsep=4pt,topsep=4pt}

\DeclareMathOperator*{\argmin}{argmin}
\DeclareMathOperator*{\minimize}{minimize}

\DeclareMathOperator{\diag}{diag}
\DeclareMathOperator{\sth}{sth}

\DeclareMathOperator{\FTNN}{FTNN}
\DeclareMathOperator{\ITNN}{ITNN}
\DeclareMathOperator{\TNN}{TNN}

\DeclareMathOperator{\FTSVT}{FTSVT}

\DeclareMathOperator*{\sgn}{sgn}

\DeclareMathOperator*{\fft}{fft}
\DeclareMathOperator*{\ifft}{ifft}

\DeclareMathOperator*{\conj}{conj}
\DeclareMathOperator*{\SVD}{SVD}
\DeclareMathOperator{\rank}{rank}
\newcommand{\abs}[1]{\left\lvert#1\right\rvert}
\newcommand{\norm}[1]{\lVert#1\rVert}
\newcommand{\tran}[1]{#1^{\mathrm{T}}}
\newcommand{\tranh}[1]{#1^{\mathrm{H}}}
\newcommand{\st}{{\text{s.t.}}}

\newsavebox\CBox
\newcommand{\e}[1]{\sbox\CBox{#1}\resizebox{\wd\CBox}{\ht\CBox}{\textbf{#1}}}

\def\a{\mathbf{a}}

\def\v{\mathbf{v}}
\def\vbar{\bar{v}}
\def\F{\mathbf{F}}
\newcommand{\Sm}{\mathbf{S}}
\newcommand{\Xm}{\mathbf{X}}
\newcommand{\Ym}{\mathbf{Y}}
\newcommand{\Lm}{\mathbf{L}}
\newcommand{\Am}{\mathbf{A}}
\newcommand{\Em}{\mathbf{E}}
\newcommand{\Um}{\mathbf{U}}
\newcommand{\Vm}{\mathbf{V}}

\def\Im{\mathbf{I}}
\def\A{\mathbf{\mathcal{A}}}
\def\B{\mathbf{\mathcal{B}}}
\def\C{\mathbf{\mathcal{C}}}

\def\E{\mathbf{\mathcal{E}}}

\def\X{\mathbf{\mathcal{X}}}
\def\U{\mathbf{\mathcal{U}}}
\def\V{\mathbf{\mathcal{V}}}

\def\R{\mathbf{\mathcal{R}}}
\def\Y{\mathbf{\mathcal{Y}}}

\def\L{\mathbf{\mathcal{L}}}
\def\S{\mathbf{\mathcal{S}}}
\newcommand{\Abar}{\mathbf{\mathcal{\bar{A}}}}
\newcommand{\Bbar}{\mathbf{\mathcal{\bar{B}}}}
\newcommand{\Cbar}{\mathbf{\mathcal{\bar{C}}}}

\newcommand{\Ubar}{\mathbf{\mathcal{\bar{U}}}}
\newcommand{\Vbar}{\mathbf{\mathcal{\bar{V}}}}
\newcommand{\Sbar}{\mathbf{\mathcal{\bar{S}}}}

\newcommand{\Xbar}{\mathbf{\mathcal{\bar{X}}}}
\newcommand{\Ybar}{\mathbf{\mathcal{\bar{Y}}}}

\newcommand{\Rbar}{\mathbf{\mathcal{\bar{R}}}}
\theoremstyle{plain} 
\newtheorem{defn}{Definition}
\newtheorem{lemma}{Lemma}
\newtheorem{theorem}{Theorem}

\begin{document}

\IEEEtitleabstractindextext{%
\begin{abstract}
Robust tensor principal component analysis (RTPCA) can separate the low-rank component and sparse component from multidimensional data. Its performance varies with different kinds of tensor decompositions, and the tensor singular value decomposition (t-SVD) is a popularly selected one due to its computational complexity. The standard t-SVD takes the discrete Fourier transform to exploit the residual in the 3rd mode, and all the frontal slices in frequency domain are optimized equally. In this paper, we incorporate the frequency filtering into t-SVD to enhance the RTPCA performance. Specially, different frequency bands are unequally treated with respect to corresponding physical meanings in the frequency-filtered tensor nuclear norm. The frequency-filtered tensor singular value thresholding can be deduced from the newly defined tensor nuclear norm optimization accordingly. The  obtained frequency-filtered RTPCA can be solved by alternating direction method of multipliers. It is the first time that frequency analysis is explicitly performed in tensor principal component analysis. With the prior knowledge between frequency bands, we propose the filtering strategy for different visual data processing tasks. Numerical experiments on synthetic 3D data, color image denoising and background modeling verify that the proposed method outperforms the state-of-the-art algorithms  in both accuracy and computational complexity.
\end{abstract}

\begin{IEEEkeywords}
tensor principal component analysis, tensor singular value decomposition, tensor nuclear norm, frequency component analysis, frequency filtering, background extraction.
\end{IEEEkeywords}}

\maketitle
\IEEEdisplaynontitleabstractindextext
\IEEEpeerreviewmaketitle

\section{Introduction}\label{sec:intro}

\IEEEPARstart{P}{rincipal} component analysis (PCA) \cite{PCA} is a classical dimension reduction technique that performs low-rank component extraction for a matrix. One of its main problems is the sensitivity to outliers. A number of improved PCA methods have been proposed to deal with it \cite{fischler1981random, huber1981robust, de2003framework, gnanadesikan1972robust, ke2005robust}. Among them, robust principal component analysis (RPCA) is the first polynomial-time algorithm with strong performance guarantees \cite{candes2011robust}, which has many successful applications including face recognition \cite{face, bao2012inductive, wang2010efficient},  background model initialization \cite{wright2009robust, li2004statistical, cao2015total} and image recovery \cite{gu2017weighted, zhao2014robust}. RPCA is designed for two-way data, but dimension reduction is required for many multi-way data, such as color images, color video sequences and hyperspectral images. The matrix computation based RPCA requires to reshape these multi-way data into matrices, which would lead to data structure information loss \cite{candes2011robust,lin2010augmented}.

Tensor is higher order generalization of vector and matrix \cite{Tensor_Decompositions, Tensor_Decompositions_1}, and it is a natural representation for multi-way data. In order to fully exploit the multidimensional structure of tensors, robust tensor principal component analysis (RTPCA) has been proposed by separating low-rank tensor component and sparse component of multi-way data \cite{lu2008mpca,lu2016tensor,cao2016total}. Fig.~\ref{pic:TRPCA} illustrates the model of RTPCA.

\begin{figure}[!t]
	\centering
	\includegraphics[width=.8\columnwidth]{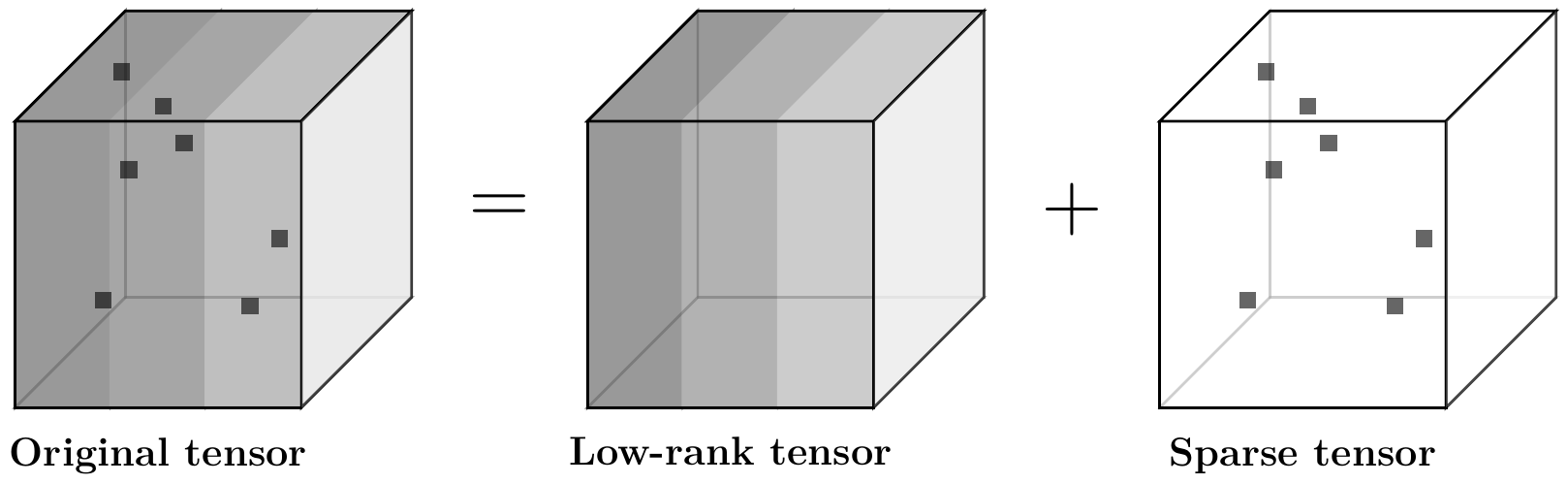}
	\caption{Illustration of RTPCA method}\label{pic:TRPCA}
\end{figure}

RTPCA methods vary with different tensor ranks. The canonical polyadic (CP) decomposition factorizes a tensor into the sum of several rank-one tensors \cite{lebedev2014speeding, luo2017tensor, liu2019low}. The CP rank of a tensor is defined as the smallest number of rank-one tensors. Methods based on CP decomposition often assume that the CP rank of the target tensor is known \cite{acar2011scalable,jain2014provable}. However, the CP rank of a tensor is NP-hard to compute \cite{de2008tensor}. Some works \cite{rai2014scalable,zhao2015bayesian} try to estimate the CP rank in Bayesian framework, but they often suffer from over-estimating or under-estimating the true CP rank. The Tucker decomposition decomposes a tensor into a core tensor multiplied by factor matrix along each mode \cite{delathauwer2000a, CPtuckerrank}. The Tucker rank is defined as the vector whose entries are the ranks of the factor matrices. As the nuclear norm is the convex envelope of the matrix rank, the sum of nuclear norms (SNN) has been proposed in the Tucker rank minimization problem \cite{liu2012tensor, goldfarb2014robust, huang2014provable}. In \cite{cao2016total,huang2014provable}, the authors successfully recover the low-rank and the sparse component from corrupted tensor based on SNN minimization. However, the Tucker decomposition has to unfold the tensors into matrices which needs high computation costs.

Recently the tensor singular value decomposition (t-SVD)  has been proposed \cite{kilmer2011factorization, braman2010third, kilmer2013third}, which factorizes a order-3 tensor as the tensor products of three tensors, as it shows in Fig.~\ref{pic:t-SVD}. The tubal rank, which is defined in t-SVD framework, can characterise the low-rank structure of a tensor very well \cite{zhang2014novel}. The tubal rank minimization problem is usually relaxed into the tensor nuclear norm (TNN) minimization problem, which can be solved by convex optimization. There are different forms of TNN in the t-SVD based RTPCA \cite{lu2019tensor, chen2017iterative, feng2020robust, liu2018improved, kong2018t,hu2016moving}, and several sparse constraints are used according to different applications \cite{zhang2014novel, zhou2017outlier, baburaj2016reweighted}. Though t-SVD can be calculated easily in the Fourier domain, none of these works exploit the prior knowledge about frequency spectrum even when it is available in some applications.

Since larger singular values of a matrix indicate its underlying principal directions, it is unfair to shrink all the singular values with a same threshold when solving the nuclear norm minimization problem. Therefore, weighted nuclear norm (WNN) is proposed to adaptively assign weights on different singular values \cite{gu2017weighted}. This data-driven strategy selects weights in each iteration according to the size of singular values in last iteration. 

Some similar weighting schemes have been successfully established for tensor analysis in recent works \cite{xu2019laplace,madathil2018twist,madathil2018dct}. In \cite{madathil2018twist}, the authors extend the WNN to the tensor case as weighted tensor nuclear norm (WTNN). Each singular value is assigned a corresponding weight, which depends on the value of the singular value itself. In \cite{xu2019laplace}, the authors find that the Laplace function, defined as $\phi(x)=1-\bm{e}^{-x/\epsilon}$, is a better approximation to the $\ell_0$-norm. So that the sum of the Laplace function of singular values is a better surrogate for tensor multi-rank than TNN. Thus they propose the $\epsilon-\TNN$ based on the Laplace function. In \cite{madathil2018dct}, the authors propose a weighted adaptive $m$-mode cosine transform tensor SVD methods. And the $m$-mode cosine transform tensor nuclear norm ($m-\TNN$) is defined as the sum of tensor nuclear norms at each mode. However, these methods do not explicitly take the physical meaning of the data itself into consideration, especially in the perspective of frequency filtering,  leading to insufficiently exploiting the prior information to process high-dimensional data.

We find that there exists prior information between the frequency bands of many visual data. However, existing tensor based methods treat all the bands equally and neglect these prior knowledge. Motivated by the extensive applications of Fourier analysis, we introduce the frequency filtering into the t-SVD framework. To our best of knowledge, this is the first work to investigate the RTPCA problem via the viewpoint of frequency filtering in t-SVD framework.

\subsection{Contributions}

This work mainly has several contributions.

Firstly, we broaden the t-SVD framework in analysis part. We rigorously analyze the essence of t-SVD and define frequency-filtered tensor nuclear norm (FTNN) by incorporating the low-rank approximation and the frequency filtering in a unified formula. The proposed FTNN can be explained from the perspective of filtering and it makes full use of the prior knowledge in the frequency domain. It is the first time that frequency analysis is taken in tensor principal component analysis

Secondly, we rigorously deduce the proximal operator of FTNN, named as FTSVT, to implement the Fourier filtering. Then we incorporate the proposed FTNN into a robust tensor principal component analysis (RTPCA) model, denoted as FTNN-RTPCA. And the FTSVT operator is applied to solve the FTNN minimization problem.

Thirdly, we propose the filtering strategy for FTNN and verify it in synthetic 3D data experiment. In practice, we explicitly point out what the prior information is and discuss about how to acquire prior knowledge. 

Finally, we employ the proposed FTNN-RTPCA model to deal with some visual tasks. 
\begin{itemize}
	\item In color image denoising experiment, the prior knowledge is obtained by statistical analysis of several images. Then we design a filtering scheme to deal with images corrupted by different ratio of sparse noise. Results show the superior performance of FTNN-RTPCA compared with existing methods.
	\item In background modeling experiment, the prior knowledge is obtained by experimental analysis. Later we design a special filtering coefficients and develop a SVD-free algorithm. Results show the performance by FTNN-RTPCA is better than some state-of-art methods, especially in running time and computational complexity.
\end{itemize} 

\subsection{Organization}

The rest of this paper is organized as follows. Section \ref{sec:notations} gives some notations and preliminaries used in this paper. In Section \ref{sec:model}, we define a new FTNN, and propose the FTNN-RTPCA method. In Section \ref{sec:experiments}, we first discuss the setting of the filtering vector and verify it in synthetic experiments. Then we conduct some numerical experiments to compare the proposed method with some state-of-the-art algorithms including color image denoising and background model initialization. Section \ref{sec:conlusion} gives a conclusion of this paper.

\section{Preliminaries and Related Works}\label{sec:notations}


\subsection{Notation}

A scalar, a vector, a matrix and a tensor are denoted as $a$, $\a$, $\Am$, $\A$. $\A^{(i_3)}$ denotes the $i_3$-th frontal slice of $\A$. The tube fiber on the third mode is denoted as $\A(i_1,i_2,:)$.

The $\ell_1$ norm and the Frobenius norm of a tensor are defined as $\norm{\A}_1=\sum_{i_1,i_2,i_3}\abs{\A_{i_1,i_2,i_3}}$ and $\norm{\A}_\text{F}=\sum_{i_1,i_2,i_3}\abs{\A_{i_1,i_2,i_3}}^2$, respectively. When $I_3=1$, these tensor norms degenerate into corresponding matrix norms. The nuclear norm of a matrix is $\norm{\Am}_* = \sum_{i_0} \sigma_{i_0} (\Am)$, where $\sigma_{i_0} (\Am) $ with $ I_0 = {\operatorname{min}\{I_1, I_2 \} } $, is the $i_0$-th singular values. 

For a matrix $\Am$, $\tranh{\Am}$ takes the complex conjugate transpose of all its entries, and $\tran{\Am}$ is the transpose matrix. $\conj(\Am)$ takes the complex conjugate matrix of $\Am$. The sets of real and complex numbers are $\mathbb{R}$ and $\mathbb{C}$, respectively.  $\left\lceil x\right\rceil$ is the nearest integer which is equal or greater than $x$.

\subsection{Discrete Fourier Transformation}\label{sec:DFT}
As the discrete Fourier transformation (DFT) plays a vital role in the t-SVD, we provide a brief introduction of the notations and definitions about it. The frequency bands and frequency components of a order-3 tensor are defined in this subsection.

The discrete Fourier transformation (DFT) matrix is defined as:
\begin{align*}
	 \F_N = \begin{bmatrix}
	 1 & 1 & 1 & \cdots & 1 \\
	 1 & \omega & \omega^2 & \cdots & \omega^{N-1} \\
	 \vdots & \vdots & \vdots & \ddots & \vdots \\
	 1 & \omega^{N-1} & \omega^{2(N-1)} & \cdots &\omega^{(N-1)(N-1)}
	 \end{bmatrix}\in\mathbb{C}^{N\times N}
\end{align*}
where $\omega=\operatorname{exp}({-\frac{2\pi j}{N}})$, where $ j = \sqrt{-1} $. We can find that:
\begin{equation}
	\F_N\tranh{\F_N} = \tranh{\F_N}\F_N = N\Im, ~\F_N^{-1} = \frac{1}{N}\tranh{\F_N}
\end{equation}

The DFT for a vector $\v=\tran{[v_1, v_2, \ldots , v_N]}$ is denoted as $\bar{\mathbf{\v}} = \F_N\v\in\mathbb{C}^N$, and the inverse DFT is represented as $\v = \F_N^{-1}\bar{\mathbf{\v}}\in\mathbb{R}^N$. These two transforms can be efficiently calculated  by fast Fourier transformation (FFT) and inverse fast Fourier transformation (IFFT). 

As it can be deduced from the definition of DFT, we have  one property as follows \cite{rojo2004some}:
\begin{equation}
\vbar_1\in\mathbb{R};~ \\
\conj(\vbar_i) = \vbar_{N-i+2}, n=2,3,\cdots,\left\lceil\frac{N+1}{2}\right\rceil \label{eq:symm}	
\end{equation}
where $ \bar{\v}=\tran{[\bar{v}_1, \bar{v}_2, \ldots , \bar{v}_N]} $.

To further analyze the spectral properties of this vector, we give the definitions of frequency band and frequency component.

\begin{defn}[\emph{\e{frequency band}}]\label{def:freqband}
	For a vector $\v\in\mathbb{R}^N$, based on the property (\ref{eq:symm}), the frequency band is defined as
	\begin{equation}
		\left\{\mathbf{\vbar}\right\}_{i} = \begin{cases}
			 \vbar_i  &  i=1~\text{or}~i = N-i+2,\\
			(\vbar_i,\vbar_{N-i+2}) & else.
		\end{cases}\label{eq:freqband}
	\end{equation}
\end{defn}
When $N$ is odd, the corresponding frequency bands are $(\vbar_1)$, $(\vbar_2,\vbar_N)$, $\cdots$, $(\vbar_{\frac{N+1}{2}},\vbar_{\frac{N+3}{2}})$;  when $N$ is even, they are $(\vbar_1)$,$(\vbar_2,\vbar_N)$,$\cdots$,$(\vbar_{\frac{N+2}{2}})$. There are always $N_v = \lceil\frac{N+1}{2}\rceil$ frequency bands in all.

\begin{defn}[\emph{\e{frequency component}}]
	For a vector $\v\in\mathbb{R}^N$, a frequency band	$\left\{\mathbf{\vbar}\right\}_{i}$ in the Fourier domain can be transformed into a frequency component $[\v]_{i}\in\mathbb{R}^N$ in the time domain as follows
	\begin{equation*}
	\begin{split}
	[\v]_{i} &= \ifft(\left\{\mathbf{\vbar}\right\}_{i}) \\
			   &= \frac{1}{N}\tranh{\F_N}\tran{[0,\cdots,0,\vbar_i,0,\cdots,0,\vbar_{N-i+2},0,\cdots]} 	
	\end{split}
	\end{equation*} 
\end{defn}

Among all the components, the zero-frequency component $[\v]_1$ is 
\begin{equation}
	[\v]_1 = \tran{[\frac{1}{N}\sum_{i=1}^N v_i,\frac{1}{N}\sum_{i=1}^N v_i,\cdots,\frac{1}{N}\sum_{i=1}^N v_i]}
 	\label{eq:zero}
\end{equation}

It is distinctive because it assembles all the energy and indicates the average energy of a vector. For example, a vector $\mathbf{x} = \tran{[4,6,4,6]}$ has 3 frequency components $[\mathbf{x}]_1 = \tran{[5,5,5,5]}$, $[\mathbf{x}]_2 = \tran{[0,0,0,0]}$, and $[\mathbf{x}]_3 = \tran{[-1,1,-1,1]}$ respectively. Intuitively, zero frequency component can be regarded as the average energy.

\begin{lemma}\label{lemma:1}
	A vector $\v\in\mathbb{R}^N$ can be represented as the sum of all frequency components. 
	\begin{equation}
	\begin{split}
		\v &= \ifft(\left\{\mathbf{\vbar}\right\}_1) + \ifft(\left\{\mathbf{\vbar}\right\}_2) + \cdots + \ifft(\left\{\mathbf{\vbar}\right\}_{N_v}) \\
		   &= [\v]_1 + [\v]_2 + \cdots + [\v]_{N_v} \label{eq:fft1}
	\end{split}
	\end{equation}
\end{lemma}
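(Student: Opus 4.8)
The plan is to reduce the claimed identity to the \emph{linearity} of the inverse DFT together with a single combinatorial fact about how the frequency bands tile the index set. First I would recall that $\v = \F_N^{-1}\bar{\v} = \frac{1}{N}\tranh{\F_N}\bar{\v}$, so reconstructing $\v$ from its spectrum $\bar{\v}$ is a linear map. Consequently, if $\bar{\v}$ can be written as a sum of vectors, each obtained from $\bar{\v}$ by retaining the entries belonging to one frequency band and zeroing out all the others, then applying $\ifft$ to each summand and invoking linearity immediately produces $\v = \sum_i [\v]_i$, since $[\v]_i = \ifft$ of the $i$-th band-supported vector by definition of the frequency component.

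The key step is therefore to verify that the frequency bands of Definition~\ref{def:freqband} partition $\{1,2,\dots,N\}$, i.e. every index lies in exactly one band. I would argue this as follows: the DC index $i=1$ forms its own singleton band, and on the remaining indices $\{2,\dots,N\}$ the assignment $i\mapsto N-i+2$ is an involution by the conjugate-symmetry property \eqref{eq:symm}. Its only fixed point is the Nyquist index $\frac{N+2}{2}$, which is an integer precisely when $N$ is even and then yields a second singleton band; otherwise every index is matched with a distinct partner into a conjugate pair $(\bar{v}_i,\bar{v}_{N-i+2})$. Counting the singletons plus the disjoint pairs gives exactly $N_v=\lceil\frac{N+1}{2}\rceil$ classes, matching the enumeration stated after Definition~\ref{def:freqband} for both parities of $N$.

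With the partition established, the remainder is bookkeeping. Let $\bar{\v}^{(i)}$ denote the vector that agrees with $\bar{\v}$ on band $i$ and vanishes elsewhere; the partition property gives $\bar{\v}=\sum_{i=1}^{N_v}\bar{\v}^{(i)}$, and the definition of the frequency component gives $[\v]_i=\ifft(\bar{\v}^{(i)})=\frac{1}{N}\tranh{\F_N}\bar{\v}^{(i)}$. Applying the linear operator $\frac{1}{N}\tranh{\F_N}$ to $\bar{\v}=\sum_i\bar{\v}^{(i)}$ and distributing over the sum then yields $\v=\sum_{i=1}^{N_v}[\v]_i$, which is exactly the assertion of the lemma.

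The hard part is not analytic but the index bookkeeping: I must confirm that $i\mapsto N-i+2$ really is a fixed-point-free involution on $\{2,\dots,N\}$ away from the Nyquist index, treat the even/odd parity of $N$ uniformly, and ensure that the self-paired DC and Nyquist terms are each counted once rather than duplicated at the boundary. Once the partition is verified, the conjugate symmetry \eqref{eq:symm} can additionally be used to check that each $[\v]_i$ is real-valued, consistent with $[\v]_i\in\mathbb{R}^N$, though this is not required for the equality itself.
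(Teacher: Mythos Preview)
Your argument is correct. The lemma follows immediately from the linearity of $\ifft$ once one checks that the frequency bands from Definition~\ref{def:freqband} partition $\{1,\dots,N\}$, and your parity case analysis for the involution $i\mapsto N-i+2$ on $\{2,\dots,N\}$ handles this cleanly.

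As for comparison: the paper states Lemma~\ref{lemma:1} without proof, treating it as an elementary consequence of the definitions. Your write-up is therefore more detailed than what the paper provides; in particular, your explicit verification that the bands form a partition (and the aside that conjugate symmetry forces each $[\v]_i$ to be real) goes beyond anything the paper spells out. There is no alternative approach to contrast with.
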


Consider a tensor $\A\in\mathbb{R}^{I_1\times I_2\times I_3}$, and $\Abar$ as the result of the DFT along the third dimension of $\A$. The DFT and inverse DFT on a tensor can be represented as follows:
\begin{equation*}
	\Abar = \fft(\A,[],3), \A = \ifft(\Abar,[],3)	
\end{equation*}
For convenience, we  abbreviate it as $\Abar = \fft(\A), \A = \ifft(\Abar)$ in this paper.

The concepts of frequency band and frequency component can be generalized to tensor data. Fig.~\ref{pic:fft} shows an illustration of the DFT on a tensor $\A\in\mathbb{R}^{I_1\times I_2\times I_3}$. $I_1 I_2$ DFTs need to be performed along the third mode, and $I = \lceil\frac{I_3+1}{2}\rceil$ frequency bands can be obtained. 

\begin{figure}[!t]
	\centering
	\includegraphics[width=.6\linewidth]{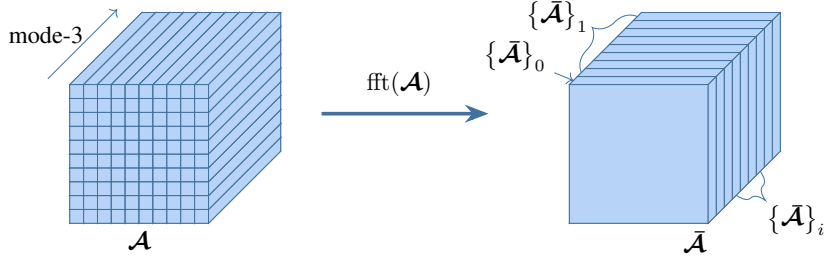}\hspace{1pt}\\
	\caption{Illustration about the DFT on a tensor.}\label{pic:fft}
\end{figure}

\subsection{Preliminaries about t-SVD}

\begin{defn}[\emph{\e{t-product}}\cite{kilmer2011factorization}]
	Given $\A\in\mathbb{R}^{I_1\times I_2\times I_3}$ and $\B\in\mathbb{R}^{I_2\times I_4\times I_3}$, the t-product between $\A$ and $\B$ is defined as follows:
	$$\C(i,j,:) = \sum_{k=1}^{I_2}\A(i,k,:)\bullet\B(k,j,:)$$
	where $\bullet$ denotes the circular convolution between two tubes on the thrid mode and $\C\in\mathbb{R}^{I_1\times I_4\times I_3}$. When converted into Fourier domain, the t-product can be calculated by matrix product on each frontal slice separately as $\Cbar^{(i_3)} = \Abar^{(i_3)}\Bbar^{(i_3)}$.
\end{defn}

\begin{defn}[\emph{\e{t-SVD}}\cite{kilmer2011factorization}]
	For a tensor $\A\in\mathbb{R}^{I_1\times I_2 \times I_3}$, the t-SVD of $\A$ is :
	$$\A = \U*\S*\tranh{\V}$$
	where $\U$ and $\V$ are orthogonal tensors of size $ I_1\times I_1\times I_3$ and $ I_2\times I_2 \times I_3$ respectively. $\S \in \mathbb{R}^{I_1 \times I_2 \times I_3}$ is an F-diagonal tensor.
\end{defn}

Fig.~\ref{pic:t-SVD} shows an illustration of the t-SVD. It can be easily calculated in the Fourier domain. We can get the frontal slices of $\Ubar,\Sbar,\Vbar$ by 
$$\SVD(\Abar^{(i_3)}) = [\Ubar^{(i_3)},\Sbar^{(i_3)},\Vbar^{(i_3)}]$$

\begin{defn}[\emph{\e{tensor nuclear norm: TNN}}\cite{lu2019tensor}]
	The tensor nuclear norm of a tensor $\A$ based on t-SVD is defined as
	\begin{equation}
	\norm{\A}_{\TNN} = \frac{1}{I_3}\sum_{i_3=1}^{I_3} \norm{\Abar^{(i_3)}}_*  \label{eq:old_TNN}
	\end{equation}
\end{defn}

\subsection{Related Works}

The basic assumption of RPCA problem is that the data matrix can be decomposed into a low-rank matrix $\Lm$ and a sparse matrix $\Em$, i.e., $\Xm = \Lm+\Em$. Candes {\itshape et al}.\cite{candes2011robust} have proven that under some incoherent conditions, $\Lm$ and $\Em$ can be recovered by solving the convex optimization problem as follows:
\begin{equation}
\minimize_{\Lm,~\Em}\ \norm{\Lm}_* + \lambda\norm{\Em}_1\ \st\ \Xm = \Lm + \Em,
\end{equation}
where $\norm{\cdot}_*$ is the nuclear norm which represents the sum of singular values of low-rank matrix $\Lm$,~$\norm{\cdot}_1$ is $\ell_1$ norm which represents the sum of absolute values of all entries in sparse matrix $\Sm$. The parameter $\lambda$ is set to balance the two terms. 

Robust tensor principal component analysis (RTPCA) assumes a tensor $\X\in\mathbb{R}^{I_1 \times I_2 \times I_3}$ can be formulated as 
\begin{equation}
\X = \L + \E,
\label{eq: TRPCA_all}
\end{equation} 
where $\L$ is a low-rank tensor and $\E$ is a sparse tensor.  Several different  nuclear norms for tensor have been proposed to solve the problem (\ref{eq: TRPCA_all}). Below are the details of these methods.

\begin{figure}[!t]
	\centering
	\includegraphics[width=.8\columnwidth]{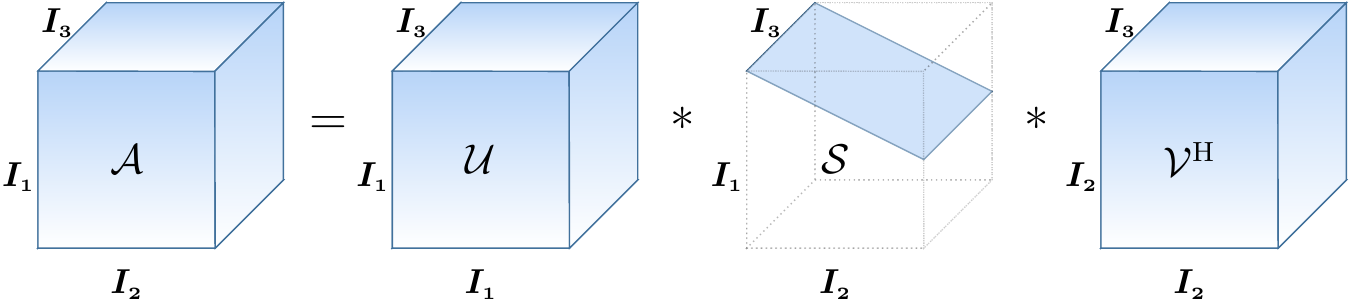}
	\caption{Illustration of t-SVD of a tensor $\A\in\mathbb{R}^{I_1\times I_2\times I_3}$}\label{pic:t-SVD}
\end{figure}

The Tucker decomposition based RTPCA can be formulated as follows \cite{liu2012tensor}:
\begin{equation}
	\minimize_{\L,~\E}\ \sum_{d=1}^{D} \lambda_d\norm{\L^{\{d\}}}_* + \norm{\E}_1,\ \st\ \X = \L + \E
\end{equation}
where $\L^{\{d\}}$ is the mode-$d$ matricization of the low-rank tensor $\L$. For example, when given a order-3 tensor, it uses the combination of three matrix nuclear norms to solve the Tucker rank minimization problem. 

Based on t-SVD, a new tensor nuclear norm (TNN) is rigorously deduced, and the corresponding convex optimization model (RTPCA-TNN) is given as follows \cite{lu2019tensor}:
\begin{equation}
	\minimize_{\L,\E}\ \norm{\L}_{\TNN} + \lambda\norm{\E}_1,\ \st\ \X = \L + \E.  
\end{equation}
where $\lambda$ is a regularization parameter, and  suggested to be set as $1/{\sqrt{\max(I_1,I_2)I_3}}$ to guarantee the exact recovery. $\norm{\L}_{\TNN}$ is the TNN for the low-rank tensor $\L$. It has been proven that the low -rank tensor and the sparse tensor can be perfectly recovered under some certain tensor incoherence conditions. The RTPCA-TNN method has been applied to solve the image recovery and background modeling problem successfully.

To further exploit the low-rank structures in multi-way data, the improved tensor nuclear norm (ITNN) is proposed, and defined as the weighted sum of TNN and nuclear norm of core matrix as follows \cite{liu2018improved}:
\begin{equation}
	\norm{\X}_{\text{ITNN}} = \norm{\X}_{\text{TNN}} + \lambda_S\norm{\bar{\Sm}}_*
\end{equation}
where $\lambda_S$ is set to balance the two terms. Fig.~\ref{pic:ITRPCA} shows how to obtain the core matrix from the core tensor.  Accordingly, a new ITNN-RTPCA method can be obtained: 
\begin{equation}
	\minimize_{\L,~\E}\ \norm{\L}_{\ITNN} + \lambda\norm{\E}_1,\ \st\ \X = \L + \E. 
\end{equation}

\begin{figure}[!t]
	\centering
	\includegraphics[width=.4\columnwidth]{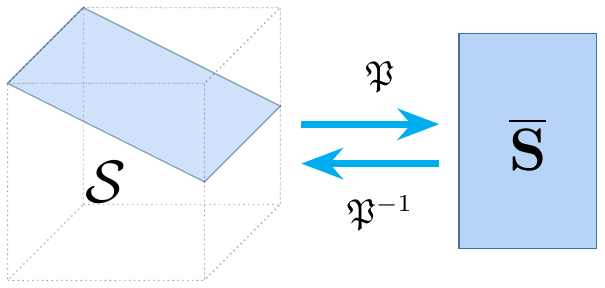}
	\caption{Illustration of transforms between core tensor $\S\in\mathbb{R}^{I_1\times I_2\times I_3}$ and core matrix $\bar{\Sm}\in\mathbb{R}^{\min\{I_1,I_2\}\times I_3}$} \label{pic:ITRPCA}
\end{figure}

\section{Frequency-Filtered Tensor Nuclear Norm (FTNN)}\label{sec:model}

\subsection{Framework of FTNN}

For a tensor $\X\in\mathbb{R}^{I_1\times I_2\times I_3}$, $\Xbar$ is the result of the DFT on the third dimension of $\X$. According to the definition \ref{def:freqband}, a frequency band of $\X$ is defiend as a conjugate pair $\{\Xbar\}_{j} = \left(\Xbar^{(j)},\Xbar^{(I_3-j+2)}\right)$ and when $j=1$ or $j=I_3-j+2$, $\{\Xbar\}_{j} = \Xbar^{(j)}$.

Then the frequency-filtered tensor nuclear norm (FTNN) is defined as:
\begin{equation}
\norm{\X}_{\FTNN} =\frac{1}{I}\sum_{j=1}^{ I}\alpha_j\norm{\{\Xbar\}_j}_*,\quad I=\left\lceil\frac{I_3+1}{2}\right\rceil\label{eq:ftnn1}
\end{equation}
where $\norm{\{\Xbar\}_j}_* = \norm{\Xbar^{(j)}}_* + \norm{\Xbar^{(I_3-j+2)}}_*$ is the sum of two matrix nuclear norms, $I$ denotes the number of frequency bands, and $\alpha_j$ is a pre-defined parameter assigned to the $j$-th frequency band, whose value depends on the prior knowledge. Such a definition emphasizes the significance of frequency bands and brings a new perspective to analyze low-rank tensor approximation problems.

Fig.~\ref{fig:frameworkFTNN} shows the framework of proposed FTNN. In t-SVD framework, the calculation of t-SVD consists of the Fourier transform and the matrix SVD, which are widespread techniques in signal processing. Most of previous works improve the part on SVD to make more use of low-rank prior information of data. However, the part on Fourier transform is less addressed, and the prior information between frequency bands is often ignored. We define the FTNN by incorporating the low-rank approximation and the frequency filtering in a unified formula. In (\ref{eq:ftnn1}), the factors $\alpha_j, 1 \leq j \leq I$ corresponds to frequency filtering, and $\norm{\{\Xbar\}_j}_*, 1 \leq j \leq I$, corresponds to low-rank matrix approximation. With the proposed FTNN, the prior information is utilized by means of frequency filtering.

\begin{figure}[!t]
	\centering
	\includegraphics[width=.5\linewidth]{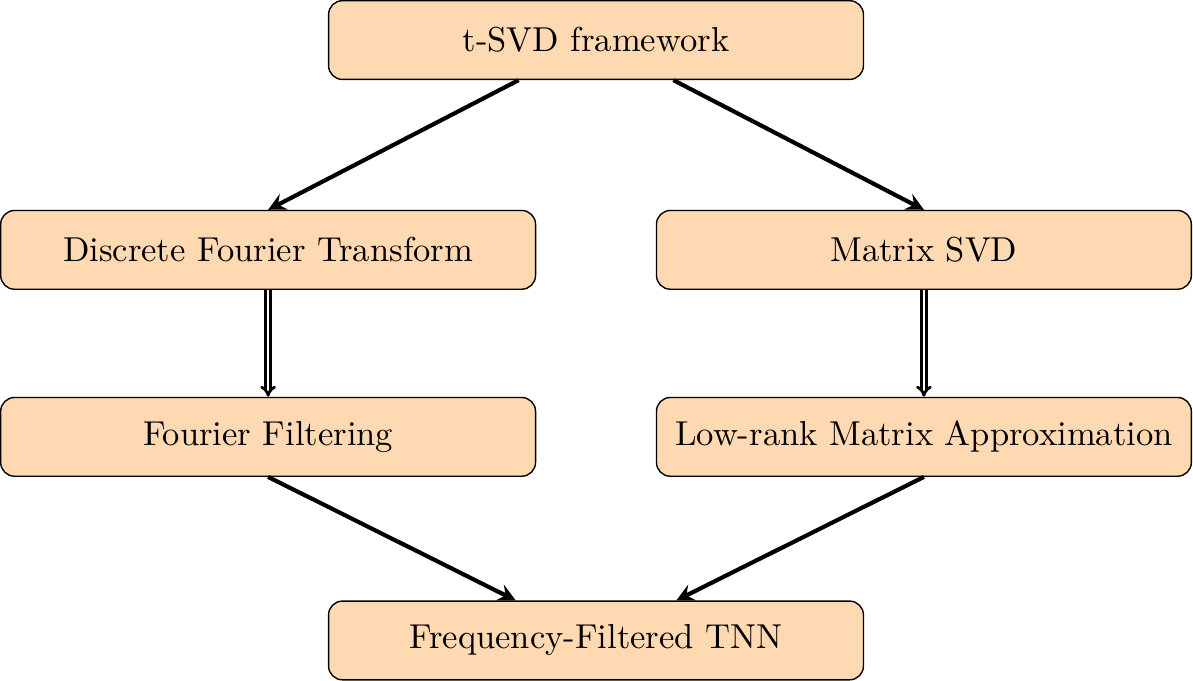}
	\caption{Framework of FTNN}\label{fig:frameworkFTNN}
\end{figure}

\subsection{Two Viewpoints of FTNN}

\subsubsection{Low-rank Tensor Approximation}

FTNN can simply be considered as a frequency band weighted tensor nuclear norm, whose purpose is to solve the low-rank tensor approximation problems. And the nuclear norm $\norm{\Xbar^{(i)}}_*$ is the convex relaxation to $\rank(\Xbar^{(j)})$. Since different frequency bands have different physical meanings and are affected differently by noise, the filtering vector $\bm{\alpha}=\tran{[\alpha_1,\alpha_2,\dots,\alpha_I]}$ is set to balance the nuclear norms of different slices.

In this case, the minimization of FTNN is a convex relaxation of weighted tensor rank minimization.

\subsubsection{Fourier Filtering}

FTNN can also be considered as a frequency filtering method. In the definition of FTNN (\ref{eq:ftnn1}), $\X$ is a signal represented in tensor format; the matrix nuclear norm $\norm{\{\Xbar\}_j}_*$ characterizes the amplitude or energy of the signal in the $j$-th band; $\alpha_j$ characterizes the filtering coefficient for the $j$-th frequency component.

The minimization of FTNN can be regarded as a filltering process. In this process, the vector $\bm{\alpha}=\tran{[\alpha_1,\alpha_2,\ldots,\alpha_{ I}]}$ controls the relative filtering thresholds for all the frequency bands; the minimization $\norm{\{\Xbar\}_j}_*,1\le j\le I$ can be regarded as a filtering process for the corresponding frequency band. The technical details for filtering can refer to Section~\ref{sec:ftsvt}. In this way, FTNN incorporates the low-rank approximation and Fourier filtering together. Equipped with the proposed FTNN, t-SVD becomes more flexible and powerful to deal with various data.

Our method is essentially a weighted combination of the nuclear norms in the transform domain, whose weights are chosen by the physical meanings according to the transforms. Any orthogonal transforms can be applied to the proposed method, such as discrete cosine trasformation (DCT) and discrete wavelet transform (DWT). The selection of DFT has two advantages. On the one hand, it possesses a fast operation mechanism by FFT, which can bring computational efficiency; On the other hand, it has the theoretical support provided by Fourier analysis theory, by which the prior knowledge in the frequency domain can be well explored.

\subsection{Frequency-Filtered Tensor Singular Value Thresholding}\label{sec:ftsvt}

Similar to tensor singular value thresholding (TSVT) operator related to TNN \cite{lu2019tensor}, the frequency-filtered tensor singular value thresholding (FTSVT) operator is strictly derived. The proximal operator of FTNN is computed as follows:
\begin{equation}
\min_{\X}\ \tau\norm{\X}_{\FTNN} + \frac{1}{2}\norm{\X-\Y}_F^2 \label{eq: ftsvtsolve}
\end{equation}

Denote $\Y = \U*\S*\tranh{\V}$ as the t-SVD of tensor $\Y$. For a threshold $\tau>0$, the FTSVT operator is defined as follows:
\begin{equation}
\FTSVT_{\tau\bm{\alpha}}(\Y) = \U*\S_{\star}*\tranh{\V} 
\end{equation}
where
\begin{equation}
\S_{\star} = \ifft(\Sbar_{\star},[],3)
\end{equation}
And each frontal slice of $\Sbar_{\star}^{(i)}$ satisfy $\Sbar_{\star}^{(i)} = (\Sbar^{(i)} - \tau\alpha_i)_{+}$. Note that $x_+$ takes the positive part of $x$, {\itshape i.e.}, $x_+ = \max(x,0)$. The FTSVT operator is the proximal operator associated with FTNN.

\begin{theorem}
	For any positive threshold $\tau>0$ and $\Y\in\mathbb{R}^{I_1\times I_2\times I_3}$, the frequency-filtered tensor singular value thresholding operator obeys
	\begin{equation}
	\FTSVT_{\tau\bm{\alpha}}(\Y) = \argmin_{\X}\ \tau\norm{\X}_{\FTNN} + \frac{1}{2}\norm{\X-\Y}_F^2\label{eq:ftsvt1}
	\end{equation}
\end{theorem}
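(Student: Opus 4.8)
The plan is to reduce the tensor optimization problem to a collection of decoupled matrix problems in the Fourier domain, where the classical singular value thresholding result can be invoked band by band. First I would use the fact that the DFT along the third mode is (up to scaling) a unitary transform, so that the Frobenius norm is preserved: by the relation $\F_N^{-1} = \frac{1}{N}\tranh{\F_N}$ one has $\norm{\X-\Y}_F^2 = \frac{1}{I_3}\norm{\Xbar-\Ybar}_F^2 = \frac{1}{I_3}\sum_{i_3=1}^{I_3}\norm{\Xbar^{(i_3)}-\Ybar^{(i_3)}}_F^2$. Combining this with the definition (\ref{eq:ftnn1}) of FTNN, the objective in (\ref{eq:ftsvt1}) can be rewritten entirely in terms of the frontal slices $\Xbar^{(i_3)}$, and crucially the objective separates into a sum over the frontal slices that do not couple to one another.

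Next I would exploit the conjugate-symmetry property (\ref{eq:symm}), which says $\conj(\Xbar^{(i_3)}) = \Xbar^{(I_3-i_3+2)}$. This means the slices come in conjugate pairs, and within each frequency band $\{\Xbar\}_j = (\Xbar^{(j)},\Xbar^{(I_3-j+2)})$ the two slices carry identical nuclear norms and identical fidelity terms. Hence the full objective collapses to a weighted sum over the $I = \lceil\frac{I_3+1}{2}\rceil$ frequency bands, where the $j$-th band contributes a term of the form $\frac{\tau\alpha_j}{I}\norm{\Xbar^{(j)}}_* + \frac{1}{2I_3}\norm{\Xbar^{(j)}-\Ybar^{(j)}}_F^2$ (doubled for the paired bands, with the pairing handled through symmetry so that optimizing one slice determines its conjugate). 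Because these band-wise terms share no variables, the minimization can be performed independently on each slice.

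For each individual slice the problem is exactly a scaled matrix SVT problem, namely minimizing $c_j\norm{\M}_* + \frac{1}{2}\norm{\M-\Ybar^{(j)}}_F^2$ for an appropriate effective threshold $c_j$ proportional to $\tau\alpha_j$. Here I would invoke the standard matrix singular value thresholding theorem, whose solution is $\SVD(\Ybar^{(j)}) = \Ubar^{(j)}\Sbar^{(j)}\tranh{\Vbar^{(j)}}$ followed by soft-thresholding the singular values at level $c_j$, i.e. replacing $\Sbar^{(j)}$ by $(\Sbar^{(j)}-c_j)_+$. Reassembling these slicewise minimizers gives $\Sbar_\star^{(i)} = (\Sbar^{(i)}-\tau\alpha_i)_+$, and applying the inverse DFT recovers precisely the FTSVT operator $\U*\S_\star*\tranh{\V}$ claimed in the statement. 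The key check is that the shared orthogonal factors $\U,\V$ from the t-SVD of $\Y$ remain optimal across all bands simultaneously, which follows because the slicewise SVT solution uses the same singular vectors $\Ubar^{(i)},\Vbar^{(i)}$ as $\Ybar^{(i)}$.

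The main obstacle I anticipate is the careful bookkeeping of the conjugate-symmetry pairing and the constant factors. I must ensure that optimizing a slice and its conjugate partner is consistent — that the minimizer of the conjugate slice is indeed the conjugate of the minimizer of the original, so that the recovered $\S_\star$ is a valid (real) inverse transform and the threshold $\tau\alpha_i$ applies uniformly across both members of each band. Verifying that the soft-thresholding level matches $\tau\alpha_i$ exactly (rather than a scaled version) requires tracking the $\frac{1}{I}$ and $\frac{1}{I_3}$ factors through the reduction, but this is routine once the decoupling is established.
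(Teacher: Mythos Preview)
Your proposal is correct and follows essentially the same route as the paper: pass to the Fourier domain via the scaled unitary relation $\norm{\X-\Y}_F^2=\frac{1}{I_3}\norm{\Xbar-\Ybar}_F^2$, decouple into slice-wise subproblems, invoke the matrix SVT theorem of Cai--Cand\`es--Shen on each slice, and use the conjugate-symmetry property to guarantee the inverse transform is real. The paper's proof cites these same ingredients (its properties (12), (34) from \cite{lu2019tensor} and Theorem~2.1 of \cite{cai2010singular}) and arrives at the identical per-slice problem. Your explicit attention to the $\tfrac{1}{I}$ versus $\tfrac{1}{I_3}$ bookkeeping and to the consistency of the conjugate-paired minimizers is in fact more careful than the paper's terse chain of equivalences, which passes over these points without comment.
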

\begin{proof}
	Denote $\Y = \U*\S*\tranh{\V}$ as the t-SVD of $\Y$. Since each slice $\Sbar^{(i)}$ is get from $[\Ubar^{(i)},\Sbar^{(i)},\Vbar^{(i)}] = \SVD(\Ybar)$, thus $\Sbar$ and $\Sbar_{\star}$ are real. By property (\ref{eq:symm}), $\FTSVT_{\tau\bm{\alpha}}(\Y)$ is real. Finally by properties (34) and (12) in \cite{lu2019tensor}, problem (\ref{eq:ftsvt1}) is equivalent to
	\begin{align}
	&\quad \argmin_{\X}\ \tau\norm{\X}_{\FTNN} + \frac{1}{2I_3}\norm{\Xbar - \Ybar}_\text{F}^2 \notag\\
	&\Leftrightarrow \argmin_{\X}\ \frac{1}{I}\sum_{j}^{I}\tau\alpha_j\norm{\{\Xbar\}_j}_* + \frac{1}{2I_3}\norm{\Xbar- \Ybar}_\text{F}^2 \notag\\
	&\Leftrightarrow \argmin_{\X}\ \frac{1}{I_3}\sum_{i_3=1}^{I_3}\left(\tau\alpha_{i_3}\norm{\Xbar^{(i_3)}}_* + \frac{1}{2}\norm{\Xbar^{(i_3)}-\Ybar^{(i_3)}}_\text{F}^2\right) \label{eq:FTSVT}
	\end{align}
	Therefore, the problem in (\ref{eq:ftsvt1}) can be divided into $I_3$ subproblems. By Theorem 2.1 in \cite{cai2010singular}, we can see that the $i$-th frontal slice of $\fft\left(\FTSVT_{\tau\bm{\alpha}}(\Y),[],3\right)$ solve the $i$-th subprolem of (\ref{eq:FTSVT}). Hence $\FTSVT_{\tau\bm{\alpha}}(\Y)$ solve the problem (\ref{eq:ftsvt1}).
\end{proof}

The computational details about FTSVT operator are given in Algorithm \ref{algor:FTNN}. Notice that when $\Rbar^{(i)}=0$,  the $i$-th frequency band $\{\Rbar\}_{i-1}$ is totally preserved, and this frequency band is totally discarded for $\Rbar^{(i)} = +\infty$. We do not need to compute SVD at this point.
\begin{algorithm}[!t]
	\setstretch{1.1}
	\caption{FTSVT operator}\label{algor:FTNN}
	\begin{algorithmic}
		\State\textbf{Input:} Tensor $\X\in\mathbb{R}^{I_1\times I_2\times I_3}$, Filtering vector $\bm{\alpha}$, Threshold $\tau$
		\State Compute $\Xbar = \fft(\X)$
		\For {$i = 1,\ldots,\left\lceil\frac{I_3+1}{2}\right\rceil$}
		\If{$\alpha_i=0$}
		\State $\Rbar^{(i)} = \Xbar^{(i)}$
		\ElsIf{$\alpha_i = \infty$}
		\State $\Rbar^{(i)} = 0$
		\Else 
		\State $[\Um, \Sm, \Vm] = \SVD(\Xbar^{(i)})$
		\State $\Rbar^{(i)} = \Um\cdot(\Sm-\alpha_i\tau)_+\cdot\tran{\Vm}$
		\EndIf
		\EndFor
		\For {$i = \left\lceil\frac{I_3+1}{2}\right\rceil+1,\ldots,I_3$}
		\State $\Rbar^{(i)} = \conj(\Rbar^{(I_3- i +2)})$
		\EndFor
		\State $\R = \ifft(\Rbar)$		
		\State\textbf{Output:} $\FTSVT_{\tau\bm{\alpha}}(\Y) = \R$
	\end{algorithmic}
\end{algorithm}

The proximal operator FTSVT implements the frequency filtering. Next we will discuss the components that are filtered during each filtering iteration. Suppose a matrix  $\Ym=\Um\Sm\tran{\Vm}$, where $\Sm=\diag(\sigma_1,\sigma_2,\ldots,\sigma_r)$. We select a truncation threshold $\tau$, which satisfies $\sigma_1\ge\sigma_2\ge\cdots\ge\sigma_i\ge\tau\ge\sigma_{i+1}\ge\cdots\ge\sigma_r$. So that the singular value threshold (SVT) \cite{cai2010singular} operator can be regarded as a filtering process as:
\begin{align}
	\Xm_{\star} &= \argmin_{\Xm}~\tau\norm{\Xm}_* + \frac{1}{2}\norm{\Xm-\Ym}_F^2 \nonumber\\
	&= \Um\diag(\sigma_1-\tau,\ldots,\sigma_i-\tau,0,\ldots,0)\tran{\Vm}\nonumber\\
	&= \Ym - \Um\diag(\tau,\ldots,\tau,\sigma_{i+1},\ldots,\sigma_r)\tran{\Vm}\nonumber\\
	&= \Ym - \Ym_{\tau}
\end{align}
where $\Ym_{\tau}$ represents the component that is filtered out of $\Ym$. It corresponds to the filtering threshold $\tau$.

Suppose there are $I$ frequency bands in all. By selecting a series of different filtering coefficients $\alpha_j,1\leq j\leq I$, the filtering thresholds of FTNN for different frequency bands are as follows:
\begin{equation*}
\bm{\tau}_{\FTNN} = [\alpha_1\cdot\tau,~\alpha_2\cdot\tau,~\ldots,~\alpha_{I}\cdot\tau].
\end{equation*}

Here all the filtering coefficients are pre-defined. For the $j$-th band with threshold $\alpha_j\cdot\tau$, the result of filtering is $\Xbar_{\star}^{(j)} = \Ybar^{(j)} - \Ybar_{\alpha_j\tau}^{(j)}$. By adjusting the filtering coefficients, the filtering for different frequency bands can be realized. With the iterations going on, $\tau$ is set to decrease gradually so that the components which are filtered will tend to be smaller.

\subsection{Frequency Components Analysis (FCA)}\label{sec:FCA}

According to the analysis results about the Fourier transform in Section \ref{sec:DFT}, when we take the DFT along the third mode of a order-3 tensor, many frequency bands in the Fourier domain can be obtained and each frequency band  corresponds to a frequency component in the time domain. In other words, given a tensor $ \X \in \mathbb{R}^{I_1\times I_2\times I_3}$, it can be decomposed into $I$ frequency components via Fourier analysis:
\begin{equation*}
	\X = [\X]_1 + [\X]_2 + \cdots + [\X]_{I} \eqno{(12)},
\end{equation*}
where $I = \left\lceil\frac{I_3+1}{2}\right\rceil$.  

\begin{figure}[!t]
	\centering
	\subcaptionbox{$\sum_{i=1}^{46}[\X]_i$}{\includegraphics[width=.26\linewidth]{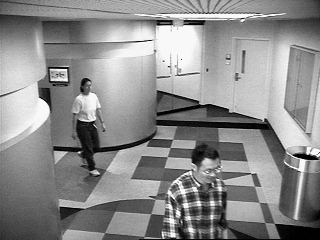}}
	\subcaptionbox{$[\X]_1$}{\includegraphics[width=.26\linewidth]{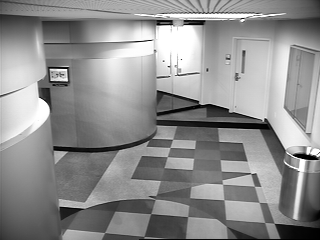}}
	\subcaptionbox{$\sum_{i=2}^{46}[\X]_i$}{\includegraphics[width=.26\linewidth]{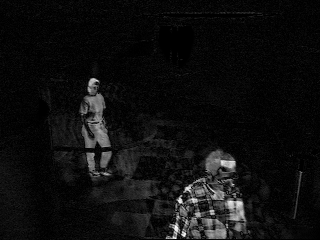}}\\[5pt]
	\subcaptionbox{$[\X]_1 + [\X]_2$}{\includegraphics[width=.26\linewidth]{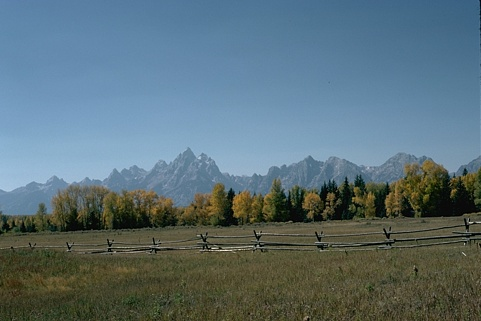}}
	\subcaptionbox{$[\X]_1$}{\includegraphics[width=.26\linewidth]{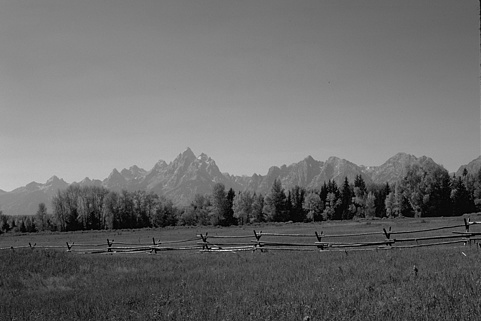}}	
	\subcaptionbox{$[\X]_2$}{\includegraphics[width=.26\linewidth]{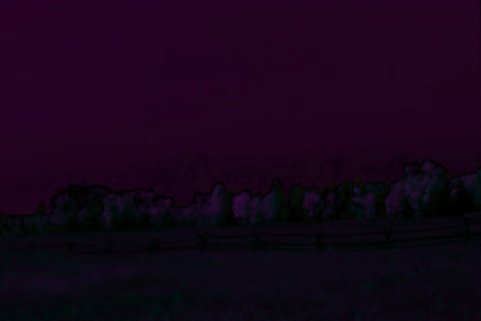}}
	\caption{FCA results of a grayscale video $\X\in\mathbb{R}^{320\times240\times90}$ and a color image $\X\in\mathbb{R}^{321\times481\times3}$.  (a),(d) Original; (b),(e) zero frequency component; (c),(f) nonzero frequency components.}
	\label{pic:FCA}
\end{figure}

We select a grayscale surveillance video sequence from SBI dataset \cite{maddalena2015towards}. It has 90 frames with size $320 \times 240$, denoted as $\X\in\mathbb{R}^{320\times240\times90}$. There  are $I = 46$ frequency bands in Fourier domain, and we analyze the corresponding 46 frequency components in the time domain. Subfigures (a)-(c) of Fig.~\ref{pic:FCA} are an illustration of the frequency component analysis (FCA) results for this grayscale video. We can see that the zero-frequency component contains almost all of the background information (low-rank component), and the moving object (sparse component) lies in the nonzero-frequency component.

In addition, a color image of size $321\times481$ is randomly selected from Berkeley Segmentation Dataset\cite{martin2001database}, denoted as $\X\in\mathbb{R}^{321\times481\times3}$. There are two frequency components including zero frequency component and non-zero frequency component. Subfigures (d)-(f) of Fig.~\ref{pic:FCA} show the FCA results for this color image. As we can see, the zero frequency component contains  the main texture information, and the nonzero frequency component contains the difference information of three channels. Therefore, we can conclude that different bands should be treated differently.

\section{Optimization for FTNN-RTPCA}

\subsection{frequency-filtered RTPCA}

The convex optimization model for RTPCA with the proposed FTNN can be formulated as follows:
\begin{equation}
\min_{\L,\E}\ \norm{\L}_{\FTNN} + \lambda\norm{\E}_1,\ \st\ \X = \L + \E. \label{eq:model}
\end{equation}
where $\X\in\mathbb{R}^{I_1\times I_2\times I_3}$ is the observed tensor, and it can be decomposed into  a low-rank tensor $\L$ and a sparse tensor $\E$. $\lambda$ is a regular parameter that is used to balance the two terms. 

The alternating direction method of multiplier (ADMM) can be applied to solve the convex optimization model (\ref{eq:model}) \cite{ADMM}. The corresponding augmented Lagrangian function is:
\begin{align}
\Gamma(\L,\E,\Y,\mu) = \norm{\L}_{\FTNN} &+ \lambda\norm{\E}_1 + <\Y,\X - \L -\E> \notag\\
&+ \frac{\mu}{2}\norm{\X-\L-\E}_\text{F}^2
\end{align}
where $\Y$ is dual variable and $\mu$ is penalty parameter. Its optimization can be divided into three subproblems, including low-rank component approximation, sparse component minimization and dual variable updatation.

In the $k$-th iteration, the subproblem about the low-rank component $\L$ can be solved by the FTSVT operator as follows:
\begin{align}
	\L_{k+1} &= \argmin_{\L}\ \norm{\L}_{\FTNN} + \frac{\mu_k}{2}\norm{\L - \X + \E_k - \frac{\Y_k}{\mu_k}}_F^2,\notag \\ 
			 &= \FTSVT_{\frac{\bm{\alpha}}{\mu_k}}(\X- \E_k + \frac{\Y}{\mu_k})  \label{eq:updateL}
\end{align}
Because all the tensor norms used in our optimization model can be calculated slices-wisely on the third mode. This t-SVD based RTPCA problem can be divided into $I_3$ matrix robust PCA problems in the Fourier domain.

The sparse component subproblem can be solved by 
\begin{align}
\E_{k+1} &= \argmin_{\E}\ \lambda\norm{\E}_1 + \frac{\mu_k}{2}\norm{\L_{k+1} + \E - \X - \frac{\Y_k}{\mu_k}}_F^2\notag \\
&= \sth_{\frac{\lambda}{\mu_k}}(\X-\L_{k+1} + \frac{\Y}{\mu_k}) \label{eq:updateE}
\end{align}
where  $\sth_{\tau}(\X)$ represents the entry-wise soft thresholding operator. It means that for any entries $x$ in $\X$ we have
\begin{align}
\sth_{\tau}(x) &= \sgn(x)\max(\abs{x}-\tau) 
\end{align}

Finally, we update  the dual variable $\Y_k$ as follows: 
\begin{equation}
	\Y_{k+1} = \Y_k + \mu_k(\X - \L_{k+1} - \E_{k+1})\label{eq:updateY}
\end{equation}

\begin{algorithm}[!t]
	\setstretch{1.3}
	\caption{ADMM for FTNN-RTPCA}\label{algor:FTRPCA}
	\begin{algorithmic}
		\State\textbf{Input:} Tensor $\X\in\mathbb{R}^{I_1\times I_2\times I_3}$, Filtering vector $\bm{\alpha}$, Parameter $\lambda$.
		\State\textbf{Initialize:} $\L=\E=\Y=0$, $\mu$, $\rho$, $\varepsilon$
		\While{not converged}
			\State 1.~Update $\L_{k+1} = \FTSVT_{\frac{\bm{\alpha}}{\mu_k}}(\X - \E_k - \frac{\Y_k}{\mu_k})$
			\State 2.~Update $\E_{k+1} = \sth_{\frac{\lambda}{\mu_k}}(\X - \L_{k+1} - \frac{\Y_k}{\mu_k})$
			\State 3.~Update $\Y_{k+1} = \Y_k + \mu_k(\L_{k+1} + \E_{k+1} - \X)$
			\State 4.~Update $\mu_{k+1} = \rho\mu_k$
			\State 5.~Check condition : $\norm{\L_{k+1} - \L_k}_\text{F} / \norm{\L_k}_\text{F}\le\varepsilon$
		\EndWhile
		\State\textbf{Output:} $\L,~\E$
	\end{algorithmic}
\end{algorithm}

After updating these three terms, the convergence condition for the low-rank component $\L$ needs to be checked. In addition, we set the parameter $\lambda = 1/\sqrt{\max(I_1,I_2)I_3}$ as it is recommended in \cite{lu2019tensor}. Algorithm \ref{algor:FTRPCA} provides the details of the whole procedure for frequency-filtered robust tensor principal component analysis (FTNN-RTPCA) method.

\subsection{Computational complexity}

Most computational costs of Algorithm~\ref{algor:FTRPCA} lie in the update of low-rank components.  Given an observed tensor $\mathcal{X} \in \mathbb{R}^{I_{1}\times I_{2} \times I_{3}}$, the filtering vector is  $\bm{\alpha}= \tran{[\alpha_1,\alpha_2,\ldots,\alpha_I]}$ where $I=\left\lceil\frac{I_3+1}{2}\right\rceil$. When  the number of elements $0<\alpha_k<+\infty$ is $ P $, we need perform $I_1I_2$ FFTs and $ P $ SVDs at each iteration. When updating the low-rank component $\L$, the computational cost in each iteration is $O\left(I_1 I_2 I_3\log I_3 + {P} I_\text{max}I_\text{min}^2\right)$, where $I_\text{max} = \max(I_1,I_2), I_\text{min} = \min(I_1,I_2)$ and $ 0\le P \le I = \left\lceil\frac{I_3+1}{2}\right\rceil$. When updating the sparse component $\E$, we need to compute the soft-thresholding operation in (\ref{eq:updateE}) and the cost is $O\left(I_1 I_2 I_3\right)$. Above all, the computational complexity of FTNN-RTPCA is $O\left(I_1 I_2 I_3(\log I_3+1) + {P} I_\text{max}I_\text{min}^2\right)$. Compared to RTPCA \cite{lu2019tensor} whose costs in each iteraion is $O\left(I_1 I_2 I_3(\log I_3+1) + I I_\text{max}I_\text{min}^2\right)$, the computational complexity for the proposed FTNN-RTPCA method is less than or equal to the classical RTPCA method \cite{lu2019tensor}. 

In Section~\ref{sec:experiments}, we will show that in some selected applications, $ P $ can be reduced to $ 0 $, which results in an SVD-free version of the proposed FTNN-RTPCA. Therefore, the processing time is greatly reduced.

\section{Strategy and Experiments}\label{sec:experiments}

In this section, the filtering strategy for FTNN will be proposed. To verify the feasibility of the filtering strategy, we designed a simulated experiment to illustrate the superiority of FTNN over TNN. Then we will show how to appply the proposed filtering scheme to real applications including color image denoising and background modeling. All experiments are conducted using MatLab R2014b software on an Intel CPU i5-6300 HQ and 8GB RAM computer.

\subsection{Filtering Strategy}

By introducing a series of filtering coefficients, FTNN improves the flexibility of TNN, but it also brings more parameters to the optimization model. Appropriate setting for the filtering coefficients is important to the performance. The filtering coefficient can be pre-defined according to the task characteristics.

\subsubsection{Prior Knowledge}

Denote the uncorrupted tensor as $\X_{G}\in\mathbb{R}^{I_1\times I_2\times I_3}$, the observed tensor corrupted by sparse noise as $\X_{N}$. The number of frequency bands is $I=\left\lceil\frac{I_3+1}{2}\right\rceil$. The prior knowledge is mainly obtained from two aspects:
\begin{itemize}
	\item The nuclear norm distribution of the uncorrupted tensor on the frequency bands $\norm{\Xbar_{G}^{(j)}}_*,1\le j\le I$.
	\item The deviation of nuclear norms caused by the sparse noise $\norm{\Xbar_{N}^{(j)}}_* -  \norm{\Xbar_{G}^{(j)}}_*$.
\end{itemize}

Unfortunately, when dealing with real world data, the true nuclear norms $\norm{\Xbar_{G}^{(j)}}_*,1\le j\le I$ are usually not directly available. In practice, the prior knowledge is obtained by experimental analysis and statistical analysis on the same kind of uncorrupted data. Such a way of acquiring the prior knowledge will be adopted for our real applications in image denoising and background modeling.

\subsubsection{Selection of Filtering Coefficients}

Since the selection of filtering coefficients on frequency bands is task-driven and the prior knowledge changes with data, generally, there are no analytic expressions between  coefficients and  frequency bands of the tensor.

In principle, the filtering coefficients are selected to make the nuclear norm distribution of recovering results more consistent with the original clean one. It is not hard to specify these coefficients by the prior knowledge of the data in the third dimension. We will show how to apply the proposed filtering scheme to applications including color image denoising and background modeling. In the image denoising experiment, we use prior knowledge to estimate the appropriate coefficients. In the background modeling experiment, we directly determine the optimal coefficients based on prior knowledge.

\begin{figure}[!t]
	\centering
	\includegraphics[width=.6\linewidth]{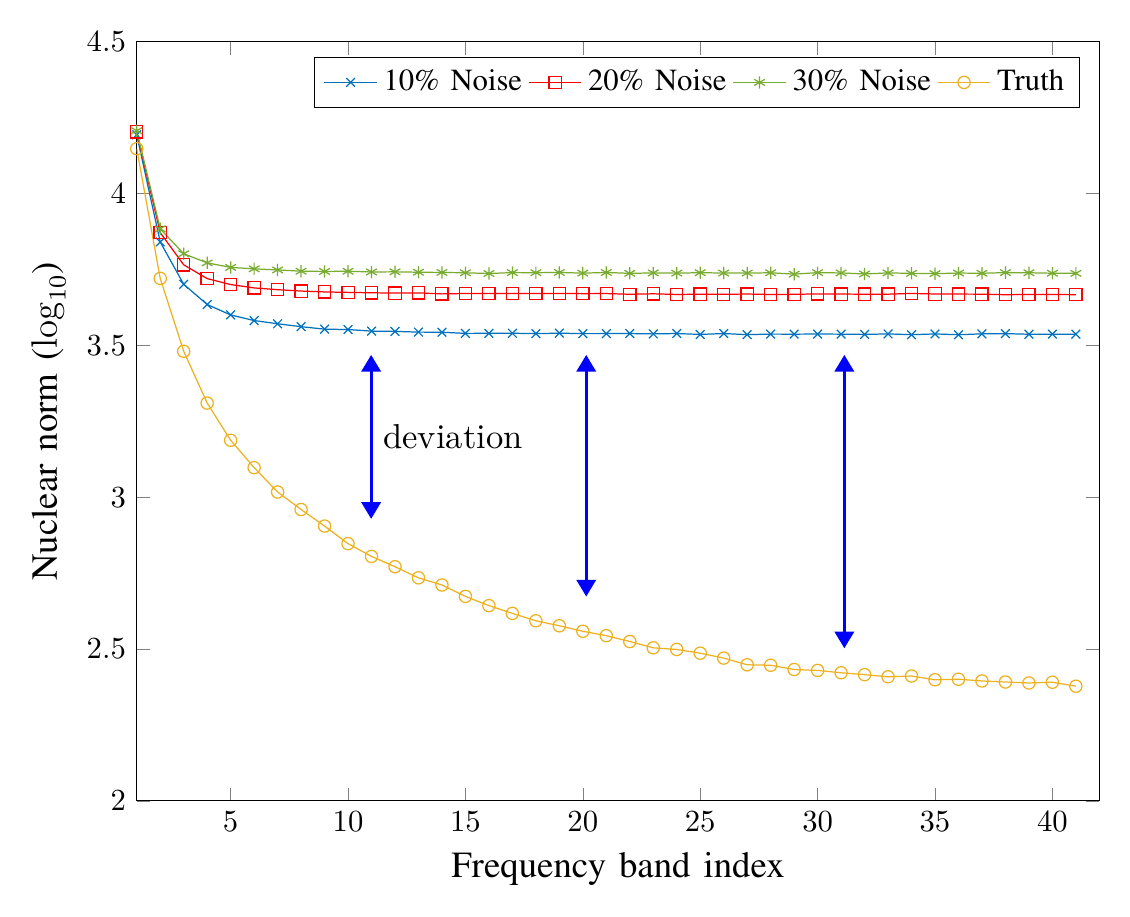}
	\caption{Nuclear norm distribution of frequency band for the synthetic data under different sparse noises.}
	\label{pic:phantom1}
\end{figure}

\subsection{Simulated Experiments}

The synthetic 3D data is generated by utilizing MATLAB function phantom3d \footnote{https://www.mathworks.com/matlabcentral/fileexchange/50974-3d-shepp-logan-phantom}. Each frontal slice of this synthetic data consists of some simple ellipsoids. As the ellipsoids for each frontal slice only change slightly, all frontal slices are very similar and can be regarded as the low-rank component. 

For better illustrating the influence of the sparse noise on different frequency bands, we analyze the nuclear norm for different frequency bands of the synthetic data under different sparse noises. The size of the clean synthetic data is $200\times200\times80$ and there will be 41 frequency bands after applying DFT on the third mode. The corrupted ratio $R_{s}=30\% $ means 30\% pixels of the clean data are  randomly selected to be random values in [0,255]. The order-3 clean tensor is corrupted with $10\%$, $20\%$, $30\%$ sparse noise respectively. Then we calculate the nuclear norms of 41 frequency bands of these corrupted data in the Fourier domain. Fig.~\ref{pic:phantom1} shows the results.

In Fig.~\ref{pic:phantom1}, the blue two-way arrow shows the deviation by  sparse noise on the true synthetic data. The length of the arrow illustrates  influence of sparse noise on the different frequency bands. It shows that sparse noise has little effect on the zero frequency component and the higher bands are more likely to be affected by the noise as the nuclear norms change more. The uncorrupted tensor enjoys a rapid decay of the nuclear norms on the third mode. It means that information is more intensive in the low frequency bands. Therefore, we consider assigning greater weights to higher frequency bands to ensure better recovery performance.

\subsubsection{Comparison}

In order to verify the performance of the FTNN-RTPCA algorithm, we conduct the synthetic 3D data denoising experiment in this section. The size of the original clean synthetic 3D data is ${200\times200\times21}$.  $\X$ represents the corrupted data with $R_{s}=30\% $ and there are 11 frequency bands in all. We compare the proposed FTNN-RTPCA with TNN-RTPCA\cite{lu2019tensor}. For fairness, the parameter is set to the same values as $\lambda = \sqrt{\max(I_1,I_2)I_3}$. Here we use Peak Signal-to-Noise Ratio (PSNR) and relative square error (RSE) to evaluate the recovery accuracy. When $\hat{\L}\in\mathbb{R}^{I_1\times I_2\times I_3}$  represents the recovered tensor and the original data is denoted by $\L_{0}\in\mathbb{R}^{I_1\times I_2\times I_3}$,  PSNR and RSE are defined as:
\begin{equation}\begin{split}
\text{PSNR} &= 10\log_{10}\left(\frac{I_1I_2I_3\norm{\L_0}_{\infty}^2}{\norm{\hat{\L}-\L_0}_F^2}\right) \\
\text{RSE}  &= \frac{\norm{\hat{\L}-\L_0}_F}{\norm{\L_0}_F}
\end{split}\label{eq:PSNR}
\end{equation} 

In this experiment, we test how the recovery performance is influenced by the filtering strategy. The filtering vector is obtained by continually adjusting the value of the filtering vector to make it more consistent with the true situation of the nuclear norm in Fig.~\ref{pic:phantom_norm}. When the two curves almost match, the filtering vector $\bm{\alpha}\in\mathbb{R}^{11}$ is set as 
\begin{equation}
\bm{\alpha} = \tran{[0.3,0.5,0.6,0.75,0.9,1,1.05,1.05,1.1,1.1,1.1]}\label{eq:syn},
\end{equation}
where lower frequency bands are assigned smaller weights and higher bands are assigned larger ones. The texture information mainly lies in the lower frequency bands, the sparse noise mainly lies in the higher frequency bands. Such a setting can make a balance between filtering out the sparse noise and retaining the texture information.

\begin{table}[!ht]
	\centering\renewcommand{\arraystretch}{1.2}\caption{Comparison of TNN and FTNN in terms of PSNE and RSE on synthetic 3D data}\label{tab:RSE}
	\begin{tabular}{|@{\hspace{.4cm}}c|c|c@{\hspace{.4cm}}|}
		\hline 
		&   FTNN     &    TNN     \\ \hline 
		PSNR &  31.1040   &  27.7939	\\ \hline 
		RSE  &  0.1149   &  0.1696	\\ \hline 
	\end{tabular} 
\end{table}


\begin{figure}[!t]
	\centering
	\includegraphics[width=.8\columnwidth]{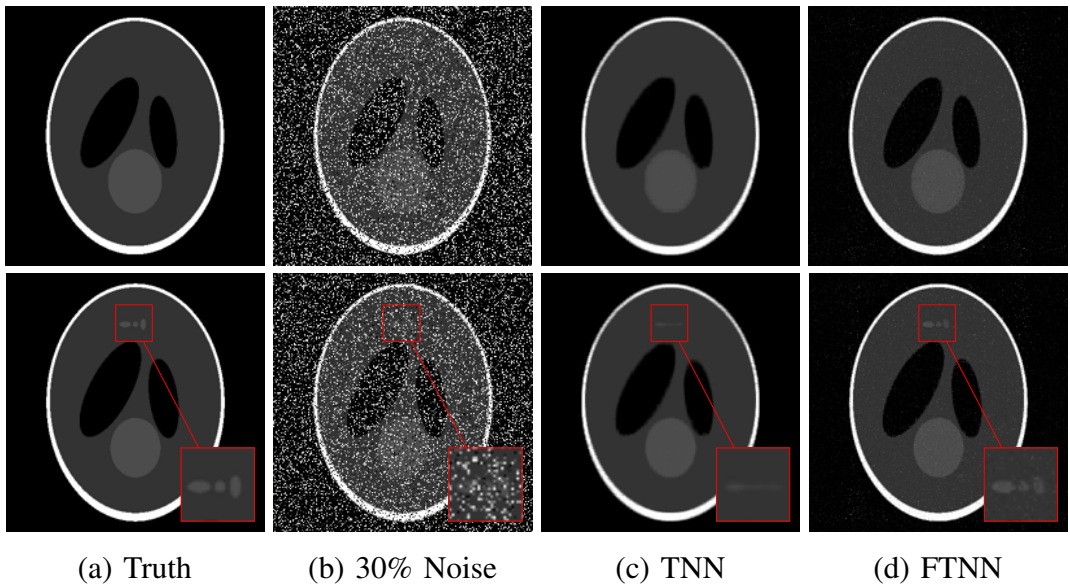}
	\caption{Recovery results of TNN and FTNN method on synthetic 3D data. The top and bottom are the results of frame~1 and~11, respectively.}
	\label{pic:phantom}
\end{figure}

Fig.~\ref{pic:phantom} shows the recovery results of our proposed FTNN and the classical TNN method on synthetic 3D data when the corrupted ratio is $30\% $. As we can see, TNN can remove the sparse noise well but regards the changes between different frontal slices as sparse components and removes them together, which would lead to large information loss of the original clean data. However, FTNN can keep these texture information from being removed as sparse components.  

In addition, the comparison of PSNR and RSE between TNN and FTNN are shown in Table~\ref{tab:RSE}. The proposed FTNN method has the better recovery accuracy performance compared with TNN. We can conclude that based on the prior knowledge of the data in the frequency domain, FTNN can adjust the filtering vector to remove the sparse noise while retaining the change information along the third dimension. Therefore, the proposed FTNN is more flexible and efficient for denoising tasks compared with TNN.
 
\begin{figure}[!t]
	\centering
	\includegraphics[width=.6\columnwidth]{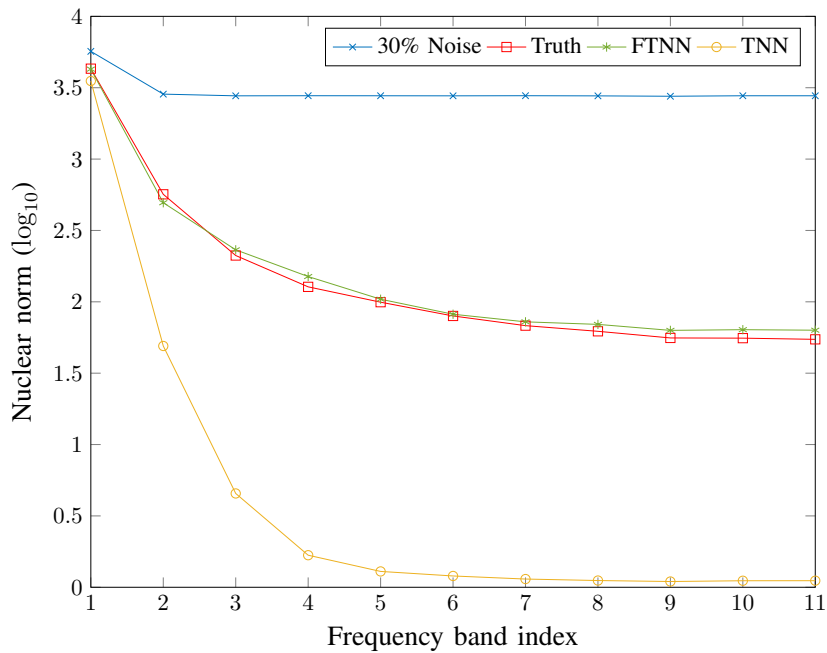}
	\caption{Nuclear norm distribution with respect to frequency band for the recovery results of TNN and FTNN.}\label{pic:phantom_norm}
\end{figure}

In the end, we analyze the nuclear norms of frequency bands for different recovery results. As Fig.~\ref{pic:phantom_norm} shows, the nuclear norm distribution for the proposed FTNN method is more approximate to the true distribution. While the TNN curve is lower than the truth in the high frequency bands, indicating a more information loss in details of the data. The comparison in Fig.~\ref{pic:phantom} and Table~\ref{tab:RSE} shows a better recovery performance of FTNN than TNN. Therefore, we conclude that the more consistent the norm distribution curve is with the truth curve, the better the recovery results will be. These results also verify the effectiveness of our filtering strategy.

When dealing with real-world data, the truth nuclear norm curve is usually unavailable. It is not easy to determine the filtering vector directly which is most consistent with the truth. To find an appropriate filtering vector under this circumstance, we need to acquire the prior knowledge in some other ways. In Section~\ref{sec:ID} and Section~\ref{sec:BM}, we will show how to apply the proposed filtering strategy to practical applications including color image denoising and background modeling.

\subsection{Color Image Denoising}\label{sec:ID}

A color image has three channels which possess strong correlation on the third mode. Since each channel of an image can be approximated by a low-rank matrix \cite{lu2014generalized}, it can be regarded as a low-rank tensor. In real world, images always suffer from sparse noise. In this section, we apply the proposed FTNN method to solve the color image denoising problem. 

\subsubsection{Acquisition of prior knowledge}

The prior knowledge is obtained by statistical analysis on the same kind of uncorrupted data. There are two frequency bands for an color image, denote as $\Xbar^{(1)}$ and $\Xbar^{(2)}$. To explicitly determine the value of the corresponding weights $\alpha_1$ and $\alpha_2$, we randomly select 10 images and add different ratio of sparse noise on them. The ratio of sparse noise range from $10\%$ to $50\%$. Then for each image, the nuclear norm information of two bands is collected as:
\begin{table}[!h]
	\centering
	\setlength{\tabcolsep}{4mm}
	\renewcommand{\arraystretch}{1.5}
	\begin{tabular}{|@{\hspace{0.2cm}}c|c|c@{\hspace{0.2cm}}|}
		\hline 
		Nuclear norms   & First Band & Second Band  \\ 
		\hline 
		Groundtruth     & $\norm{\Xbar^{(1)}_G}_*$ & $\norm{\Xbar^{(2)}_G}_*$ \\ 
		\hline 
		corrupted data  & $\norm{\Xbar^{(1)}_N}_*$ & $\norm{\Xbar^{(2)}_N}_*$ \\ 
		\hline 
	\end{tabular}
\end{table}

\noindent In two-band case, we propose the filtering scheme for $\alpha_1$ and $\alpha_2$ as :
\begin{equation}
\frac{\alpha_1}{\alpha_2} = \frac{\norm{\Xbar^{(1)}_N}_* - \norm{\Xbar^{(1)}_G}_*}{\norm{\Xbar^{(2)}_N}_* - \norm{\Xbar^{(2)}_G}_*}
\end{equation} 
Usually we set $\alpha_2=1$ and then the value of $\alpha_1$ can be specified by above formula. In practice, we find that $\alpha_1$ always ranges from $0.3$ to $0.6$ and the larger the noise ratio is, the larger $\alpha_1$ is. Finally we give the filtering vector for image denoising as :
\begin{equation}
\begin{split}
\text{10\% Noise : } \bm{\alpha} &= \tran{[0.35, 1]} \\
\text{20\% Noise : } \bm{\alpha} &= \tran{[0.45, 1]} 
\end{split}
\end{equation}

\subsubsection{Comparison}
 
Fifty color images are chosen randomly from the Berkeley Segmentation Dataset \cite{martin2001database} for this group of experiments. For every color image, we randomly choose 10\% and 20\% pixels and set their values as random values in the range [0,255]. To verify the performance of our FTNN method, we select some other state-of-the-art methods for comparison including RPCA \cite{candes2011robust}, SNN\cite{liu2012tensor}, TNN\cite{lu2019tensor}. RPCA is a matrix-based method while the others are tensor-based methods. Suppose the size of the corrupted image is $I_1\times I_2\times 3$, we set  $\lambda= 1 / \sqrt {\text{max}(I_{1},I_{2})}$ for RPCA and $\mathbf{\lambda} = \tran{[15,15,1.5]}$ for SNN. As for TNN and the proposed FTNN, the parameter $\lambda$ is set to be $\sqrt{3 \times \max(I_1,I_2)}$.

\begin{figure*}[!ht]
	\centering
	\includegraphics[width=\textwidth]{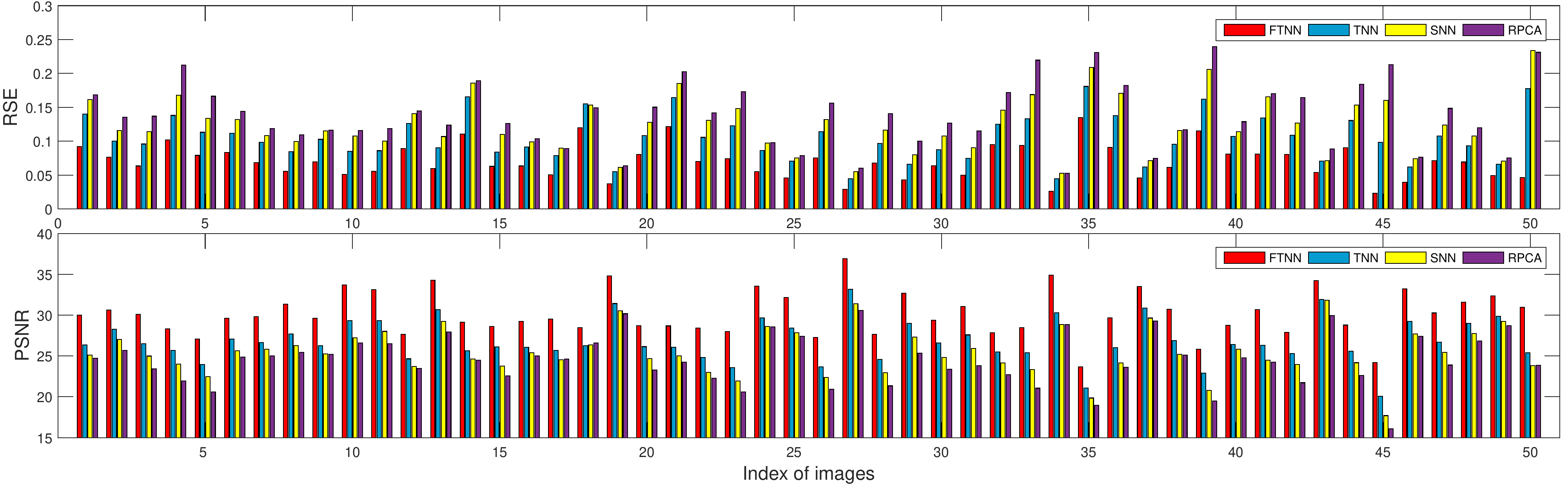}
	\caption{Comparision of PSNR and RSE with 20\% noise}\label{pic:psnr_compare}
\end{figure*}

\begin{table}[!ht]
	\centering\renewcommand{\arraystretch}{1.2}
	\def\h{\phantom{1}}\setlength{\tabcolsep}{2mm}
	\caption{Comparison of recovery performance for varying noises on color images.}\label{tab:psnr_compare}
	\scalebox{.85}{
		\begin{tabular}{|@{\hspace{.1cm}}c|ccccc@{\hspace{.1cm}}|}
			\hline
			Corrputed Ratio 
			&    Indictors 	 &  RPCA     &  SNN    &   TNN    &  FTNN   \\ \hline
			\multirow{2}*{10\%}
			& PSNR   &  25.8731  & 26.8047 & 28.6684 &  \e{35.2650}  \\ 
			& RSE    &  0.12363  & 0.11017 & 0.08834 &  \e{0.04117}\\ \hline
			\multirow{2}*{20\%}
			& PSNR   &  24.5125  & 25.4716 & 26.8288 &  \e{30.1411}  \\ 
			& RSE    &  0.14331  & 0.12783 & 0.10880 &  \e{0.07431}\\   \hline
	\end{tabular}}
\end{table}

We use PSNR and RSE to evaluate the recovery performance of these methods. Table~\ref{tab:psnr_compare} shows the average value of two  metrics of 50 examples with different methods under varying noises. It shows that the proposed FTNN method obtains the highest PSNR values and the lowest RSE value. We can conclude that FTNN achieves the best recovery performance. In addition, we present the comparison results of different methods in terms of PSNR and RSE for 50 groups of experiments when the corrupted ratio is 20\% in Fig.~\ref{pic:psnr_compare}. We can see that our FTNN always achieve the highest recovery accuracy. 

Fig.~\ref{pic:denoise_example} shows the results of five examples, which are animal, house, child, flower and person, respectively.  It can be seen that the recovery images by RPCA, SNN, TNN methods are relatively fuzzy while the proposed FTNN can obtain the best recovery images. In particular, the results of FTNN are very clear in details such as the crosses on the roof and the stamens of the flowers.  

Based on the above comparison, we summarize several conclusions. First, RPCA gets the worst recovery performance because it processes each channel separately and the relevant information about three channels are ignored. Secondly, FTNN and TNN can obtain better results compared with RPCA and SNN. The reason is that the t-SVD framework make more efficient use of the multi-dimensional structure of the data. Finally, the proposed FTNN can achieve the best performance and recover more details because each band is treated differently. 

Since the filtering vector is chosen according to the prior knowledge of the data in the Fourier domain, more details have been recovered in the results by FTNN method. Therefore, FTNN is an efficient and flexible algorithm to perform well by designing an appropriate weighted vector.

\begin{figure*}[!t]
	\centering
	\includegraphics[width=\textwidth]{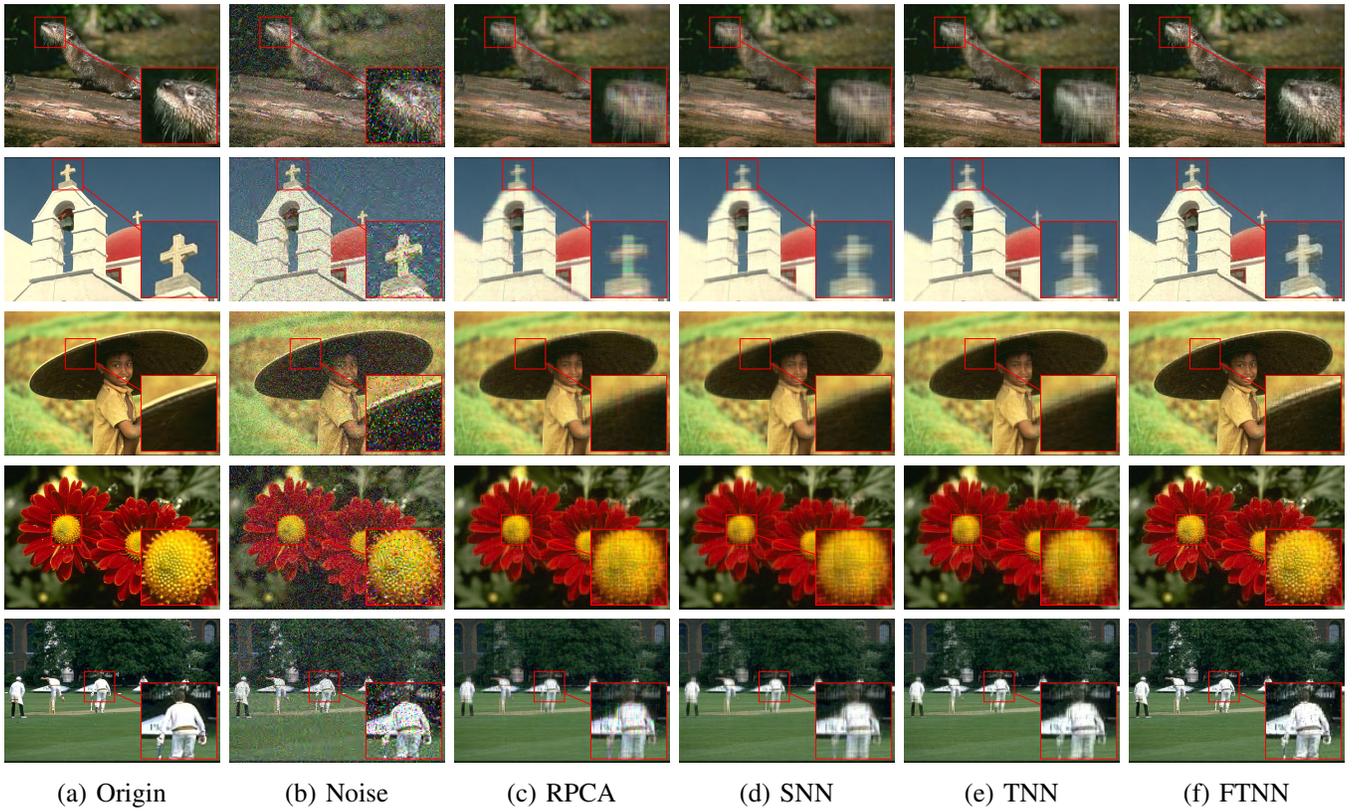}
	\caption{Comparison of some examples on color image denoising with $20\%$ noise. (a) Original images; (b) Noisy images; (c), (d), (e) and (f) are the recovery images by RPCA, SNN, TNN and FTNN respectively.}
	\label{pic:denoise_example}
\end{figure*}

\subsection{Background Modeling}\label{sec:BM}

In this section, we apply the proposed FTNN method to solve the background modeling problem. The task of background modeling is to get a good and clean background also known as the foreground-free image from a video sequence. This is a pre-process step in many visual applications such as surveillance video processing, object direction and segmentation. The RPCA-related methods can be used to solve this problem because of the strong correlations between frames. 

\subsubsection{Acquisition of prior knowledge}

The prior knowledge for background modeling is obtained by  analysis of FCA results in Section~\ref{sec:FCA}. Supposing a video sequence has $I_3$ frames in all, there are $ I=\left\lceil\frac{I_3+1}{2}\right\rceil$ bands in the Fourier domain. As we only want to get the foreground-free image from the corresponding video data, the foreground-free video only contains the zero-frequency component. Therefore, we have:
\begin{equation*}
\norm{\Xbar^{(2)}_G}_* = \norm{\Xbar^{(3)}_G}_* = \cdots = \norm{\Xbar^{( I)}_G}_* = 0
\end{equation*}
Different from the filtering scheme proposed in image denoising, we design a special scheme named as zero-frequency filtering. As the moving objects always lie in the nonzero-frequency components, we set $ \alpha_i = +\infty, i = 2 , 3, \ldots, I$ to filter out all the nonzero-frequency bands. Thus we reserve zero-frequency band and discard other bands. Finally, the filtering vector $\bm{\alpha}$ can be set as follows:
\begin{equation}
\bm{\alpha} = \tran{[0, +\infty, +\infty, \ldots, +\infty ]} \label{eq:FTNN_BM}
\end{equation} 

In this way, no SVDs are needed when calculating FTSVT operator to update the low-rank component. The computational cost will depend on the FFT in each iteration, which should make the proposed FTNN algorithm run much faster than other RPCA-based methods. We will present the comparison of running time later. 

\subsubsection{Comparison}

To verify performance of our FTNN method in this experiment, we select five color video sequences from dataset SBI \cite{maddalena2015towards}. They are CAVIAR1 sequences with 150 frames of size $384\times256\times3$ , HumanBody2 sequences with 93 frames of size $320\times240\times3$, HighwayI sequences with 88 frames of size $320\times240\times3$, HighwayII sequences with 100 frames of size $320\times240\times3$ and IBMTest2 sequences with 90 frames of size $320\times240\times3$. For comparison, we choose four methods. They are matrix-based method RPCA, tensor-based methods TNN-RTPCA, two state-of-the-art methods including DECOLOR\cite{zhou2012moving} and BRTF\cite{zhao2015bayesian}. All the parameters are set as recommended in the paper.

\begin{table*}[!ht]
	\centering
	\setlength{\tabcolsep}{3mm}
	\renewcommand{\arraystretch}{1.3}
	\caption{Comparison of different evaluation metrics on background modeling}\label{tab:BM}
	\begin{tabular}{|@{\hspace{.3cm}}c|ccccccc@{\hspace{.3cm}}|}
		\hline
		Sequence &	Methods  & AGE & pEPs & pCEPs & MSSSIM & PSNR	    & CQM   	   \\ \hline
		\multirow{5}*{CAVIAR1}
		&	FTNN    & \e{3.0717} & 0.0037 	  & 0.0027     & \e{0.9917} & \e{33.6179} & \e{33.8376}	 \\ 
		&	BRTF    & 3.7218 	 & 0.0052     & 0.0033 	   & 0.9793 	& 32.6354     & 32.9582      \\ 
		&	DECOLOR & 3.2065 	 & \e{0.0030} & \e{0.0020} & 0.9896 	& 32.4944     & 32.7749 	 \\
		&	TNN     & 6.2369 	 & 0.0491     & 0.0410 	   & 0.8883 	& 26.0245     & 26.6583 	 \\
		&	RPCA    & 4.9186 	 & 0.0305     & 0.0253     & 0.9330     & 29.7088 	  & 30.1066    	 \\ \hline
		\multirow{5}*{HumanBody2}
		&	FTNN    & \e{3.3397} & \e{0.0038} & \e{0.0003} & \e{0.9975} & \e{34.5978} & \e{35.0202} \\ 
		&	BRTF    & 5.8037 	 & 0.0438     & 0.0262 	   & 0.9725 	& 26.9631     & 27.5809 	\\
		&	DECOLOR & 3.9077 	 & 0.0064     & 0.0006 	   & 0.9931 	& 31.1452     & 31.0911 	\\
		&	TNN     & 5.0021 	 & 0.0208     & 0.0090 	   & 0.9890 	& 30.6921     & 31.0409 	\\
		&	RPCA    & 5.1913 	 & 0.0252     & 0.0127 	   & 0.9866     & 29.8480 	  & 30.4087     \\ \hline
		\multirow{5}*{HighwayI}
		&	FTNN    & \e{1.7890} & \e{0.0027} & \e{0.0004} & \e{0.9901} & \e{38.2507} & \e{39.1607} \\ 
		&	BRTF    & 2.4575 	 & \e{0.0027} & \e{0.0004} & 0.9881 	& 36.9758     & 37.8537 	\\
		&	DECOLOR & 3.8674 	 & 0.0288     & 0.0175 	   & 0.9501 	& 28.3436     & 29.2440 	\\
		&	TNN     & 4.4316 	 & 0.0056     & 0.0006 	   & 0.9401 	& 32.2916     & 33.3336 	\\
		&	RPCA    & 2.7039 	 & 0.0049     & 0.0013     & 0.9826     & 36.3446 	  & 37.1835     \\ \hline
		\multirow{5}*{HighwayII}
		&	FTNN    & 2.6195     & \e{0.0047} & \e{0.0000} & 0.9934     & 34.6171     & 35.4127 	\\ 
		&	BRTF    & \e{2.4876} & 0.0049     & \e{0.0000} & \e{0.9946} & 34.3662     & 35.0447     \\
		&	DECOLOR & 2.5498 	 & 0.0051     & 0.0001 	   & 0.9933 	& \e{34.9028} & \e{35.6343} \\
		&	TNN     & 3.3198 	 & 0.0058     & \e{0.0000} & 0.9881 	& 32.3148     & 33.1945 	\\
		&	RPCA    & 2.4542 	 & 0.0055     & \e{0.0000} & 0.9938     & 33.5809 	  & 34.4022     \\ \hline
		\multirow{5}*{IBMTest2}
		&	FTNN    & \e{2.7945} & \e{0.0006} & \e{0.0000} & \e{0.9957} & \e{36.6157} & \e{36.4704} \\ 
		&	BRTF    & 4.3095 	 & 0.0048     & 0.0048 	   & 0.9945 	& 33.5641     & 33.4791 	\\
		&	DECOLOR & 4.2134 	 & 0.0031     & \e{0.0000} & 0.9941 	& 34.1732     & 34.2890 	\\
		&	TNN     & 4.7225 	 & 0.0248     & 0.0130 	   & 0.9857 	& 30.9800     & 31.3569 	\\
		&	RPCA    & 4.2342 	 & 0.0049     & 0.0006     & 0.9929     & 33.1650 	  & 33.3380     \\ \hline
	\end{tabular}
\end{table*}
\begin{table*}[!ht]
	\centering\setlength{\tabcolsep}{3mm}
	\def\h{\phantom{1}}\renewcommand{\arraystretch}{1.2}
	\caption{Comparison of running time (seconds) on background modeling}\label{tab:BM_time}
	\begin{tabular}{|@{\hspace{.4cm}}c|ccccc@{\hspace{.6cm}}|}
		\hline
		Sequence   &     RPCA   &    TNN    &   DECOLOR   &   BRTF 	 &   FTNN      \\ \hline
		CAVIAR1    &   159.17   &  333.47   &   1266.84   &  210.24   & \e{43.05}   \\ \hline
		HumanBody2 &  \h97.26   &  156.77   &  \h626.30   & \h96.14   & \e{23.26}   \\ \hline
		HighwayI   &  \h89.95   &  146.86   &  \h462.12   & \h88.06   & \e{20.37}   \\ \hline
		HighwayII  &  \h97.76   &  148.54   &  \h347.20   & \h96.39   & \e{20.72}   \\ \hline
		IBMTest2   &   103.63   &  151.55   &   \h281.21   & \h97.59   & \e{18.19}   \\ \hline
	\end{tabular}
\end{table*}

\begin{figure*}[!t]
	\centering
	\includegraphics[width=\textwidth]{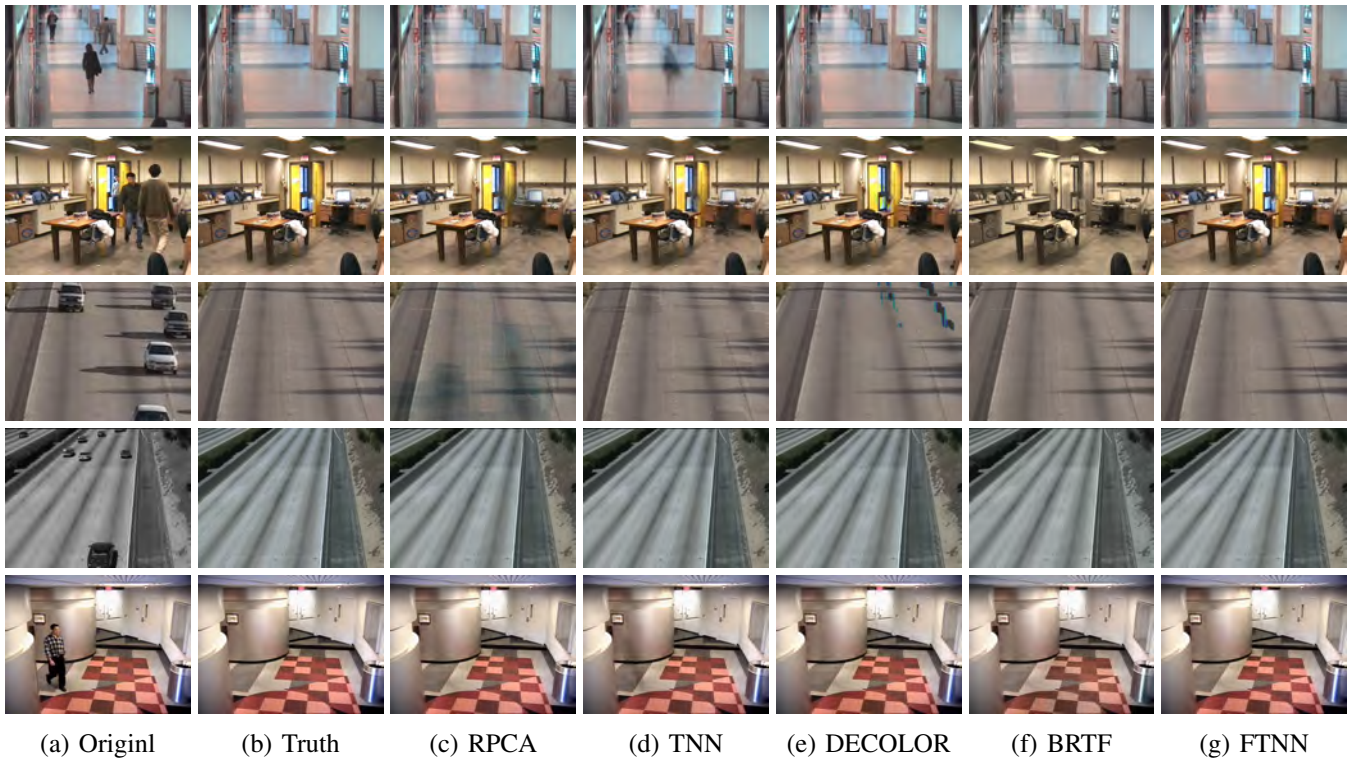}
	\caption{Background modeling on different video sequences. From top to bottom are CAVIAR1, HumanBody2, HighwayI, HighwayII, IBMtest2 video sequences respectively.}\label{pic:BM}
\end{figure*}

Six widely used metrics are used to evaluate the quality of the recovered background from video sequences as follows:

\begin{enumerate}
	\item AGE (Average Gray-level Error), is the average of the gray-level absolute difference between ground-truth and the recovered background. Range in $[0,255]$.
	\item pEPs (Percentage of Error Pixels), is the percentage of error pixels, whose absolute difference between ground-truth and the recovered background is greater than a threshold, with respect to the total number of pixels in the image. Range in $[0,1]$.
	\item pCEPs (Percentage of Clustered Error Pixels), is the percentage of clustered error pixels, whose 4-connected neighbours are also error pixels, with respect to the total number of pixels in the image. Range in $[0,1]$.
	\item MSSSIM (Multi-Scale Structural Similarity Index), is the estimate of the perceived visual distortion. Range in $[-1,1]$. Defined in \cite{MSSSIM,SSIM}.
	\item PSNR: Mentioned before in (\ref{eq:PSNR}).
	\item CQM (Color image Quality Measure), is a metric which is calculated in YUV color space and based on the PSNR computed in the single YUV band. Defined in \cite{CQM}.
\end{enumerate}

For the first three evaluation metrics AGE, pEPs and pCEPs, the lower these values, the better the recovery background. For the last three evaluation metrics MSSSIM, PSNR and CQM, the higher these values, the better the recovered background. 

Table~\ref{tab:BM} shows the evaluation metric results of different methods on five color video sequences. It can be seen that FTNN method can achieve the best precision of at least two metrics for all sequences. Meanwhile, compared with RPCA and TNN, the results by BRTF, DECOLOR and FTNN methods have higher accuracy.

Fig.~\ref{pic:BM} illustrates the recovered background image by five methods on CAVIAR1, HumanBody2, HighwayI, HighwayII and IBMTest2 sequences.

We can see that the background images recovered by TNN always have the foreground ghosting. It indicates that TNN can not remove the sparse foreground component sufficiently. BRTF method removes the foreground very well but it has a distinct color distortion. DECOLOR can get a good background in most cases but performs badly in HighwayI sequence. As for our FTNN method, foreground pixels can be always removed sufficiently in all sequences and  background information can be preserved very well.

Finally, Table~\ref{tab:BM_time} shows the comparison of running time for all methods. It can be seen that the running time of the FTNN method is far less than the other methods, which verify the efficiency of our method. The reason is that we set the filtering vector to  make our FTNN  SVD-free as the computational costs of SVD are very expensive in RPCA-related problems. 

Overall, according to the prior information of different data in the Fourier domain, the proposed method can adaptively choose the filtering vector and achieve better performance in accuracy and running time.

\section{Conclusion}\label{sec:conlusion} 

In this paper, we propose a novel robust tensor principal component analysis (RTPCA) method based on the frequency-filtered tensor nuclear norm (FTNN), which can make better use of the prior information in the frequency domain. We first define the framework of proposed FTNN, which combines the low-rank minimization and Fourier filtering together. Then we afford two viewpoints of FTNN and rigorously deduce the frequency-filtered tensor singular value thresholding (FTSVT) operator. Later we give the frequency component analysis for some visual data. Finally we study the filtering strategy and discuss the selection of the filtering vector. Experiments on synthetic 3D data discuss how to choose a appropriate filtering vector and show the superiority and flexibility of the proposed FTNN. Experiments on color image denoising and background modeling from video sequence shows the proposed method outperforms the existing methods in accuracy and running time.

\bibliographystyle{IEEEtran}

\bibliography{IEEEabrv,reference}

\end{document}